\documentclass{article}

 \usepackage[preprint]{neurips_2026}

\usepackage[utf8]{inputenc} 
\usepackage[T1]{fontenc}    
\usepackage{hyperref}       
\usepackage{url}            
\usepackage{booktabs}       
\usepackage{amsfonts}       
\usepackage{amsmath}
\usepackage{wrapfig} 
\usepackage{dsfont}
\usepackage{graphicx}
\usepackage{subcaption}
\usepackage{nicefrac}       
\usepackage{microtype}      
\usepackage{xcolor}         
\usepackage{algorithm}
\usepackage{algorithmic}
\usepackage{tabularx} 
\usepackage{amsthm}
\newtheorem{assumption}{Assumption}
\newtheorem{theorem}{Theorem}
\newtheorem{proposition}{Proposition}

\usepackage[most]{tcolorbox}
\usepackage{fvextra}

\usepackage{listings}

\definecolor{diffadd}{HTML}{1B6D2A}
\definecolor{diffdel}{HTML}{B0322B}
\definecolor{diffhunk}{HTML}{6F4FAA}
\definecolor{difffile}{HTML}{555555}
\definecolor{boxbg}{HTML}{FAFAF7}
\definecolor{boxframe}{HTML}{555555}
\definecolor{pycomment}{HTML}{777777}
\definecolor{pystring}{HTML}{0B5394}
\definecolor{pykw}{HTML}{8E2A6B}

\lstdefinelanguage{udiff}{
  morecomment=[f][\color{difffile}]{+++},
  morecomment=[f][\color{difffile}]{---},
  morecomment=[f][\color{diffhunk}\bfseries]{@@},
  morecomment=[f][\color{diffadd}]{+},
  morecomment=[f][\color{diffdel}]{-},
}

\lstdefinestyle{diffstyle}{
  language=udiff,
  basicstyle=\ttfamily\scriptsize,
  showstringspaces=false,
  breaklines=true,
  breakatwhitespace=false,
  postbreak=\mbox{\textcolor{gray}{\,$\hookrightarrow$\,}},
  columns=fullflexible,
  keepspaces=true,
  upquote=true,
  literate={─}{{-}}1 {–}{{-}}1 {—}{{-}}1,
  aboveskip=0.4em,
  belowskip=0.4em,
}

\lstdefinestyle{pystyle}{
  language=Python,
  basicstyle=\ttfamily\scriptsize,
  keywordstyle=\color{pykw}\bfseries,
  commentstyle=\color{pycomment}\itshape,
  stringstyle=\color{pystring},
  showstringspaces=false,
  breaklines=true,
  breakatwhitespace=false,
  postbreak=\mbox{\textcolor{gray}{\,$\hookrightarrow$\,}},
  columns=fullflexible,
  keepspaces=true,
  upquote=true,
  aboveskip=0.4em,
  belowskip=0.4em,
}

\newtcolorbox{codebox}[1]{%
  breakable,
  enhanced,
  colback=boxbg,
  colframe=boxframe,
  fonttitle=\bfseries\small,
  title={#1},
  boxrule=0.4pt,
  arc=2pt,
  left=4pt,
  right=4pt,
  top=3pt,
  bottom=3pt,
}

\definecolor{promptbg}{HTML}{F7F7F2}
\definecolor{promptframe}{HTML}{555555}
\definecolor{slotcolor}{HTML}{0B5394}

\newtcolorbox{promptbox}[1]{%
  breakable,
  enhanced,
  colback=promptbg,
  colframe=promptframe,
  fonttitle=\bfseries\small,
  title={#1},
  boxrule=0.5pt,
  arc=2pt,
  left=6pt, right=6pt, top=4pt, bottom=4pt,
}

\newcommand{\slot}[1]{\textcolor{slotcolor}{\texttt{\{#1\}}}}

\usepackage[dvipsnames]{xcolor}

\title{Self Improvement via Fast Tree-search}

\author{%
  Xinghong Fu\thanks{Correspondence to: fxh@mit.edu} \\
  Massachusetts Institute of Technology \\
  \And
  Aravinth Kulanthaivelu \\
  Sakana AI \\
  \And
  Yutaro Yamada \\
  Sakana AI \\
}

\begin{document}

\maketitle

\begin{abstract}
Coding agents can recursively modify their own implementations, forming a loop of self-improvement.
While prior work shows this can boost performance on coding benchmarks, existing approaches are costly and compute-intensive.
We introduce a simple, sample-efficient self-improvement framework that significantly improves coding performance under strict budget constraints.
We identify evaluation of candidate self-modifications as the main runtime bottleneck since prior approaches estimate their effectiveness by re-running a subset of benchmark tasks with the modified agent, which is time-consuming.
We introduce Recursive \textbf{S}elf \textbf{I}mprovement via \textbf{F}ast \textbf{T}ree-search (SIFT), which augments these downstream task evaluations with an LLM-as-a-judge signal that performs \emph{pairwise comparisons} between candidate patches, where the win-loss record is aggregated with a regularized Bradley-Terry model, and the resulting strength scores drive rank-based parent sampling inside a lightweight disaggregated tree search. 
Expensive downstream task evaluations are reserved only for the most promising nodes. 
Using a fully disaggregated tree search pipeline, the judge scores provide intermediate signal to guide exploration on promising candidate patches without being bottlenecked by slow evaluation runs.
SIFT outperforms existing tree-search based self-evolution frameworks on the full Polyglot benchmark with significantly lower resource requirements in terms of CPU hours, wall clock time, and API cost.
\end{abstract}

\section{Introduction}

Large Language Models (LLMs) have enabled the rapid development of autonomous agents, capable of reasoning, planning, and executing complex workflows. Among these, coding agents have made leapfrog gains within the past years, demonstrating the ability to solve increasingly difficult software engineering tasks~\citep{jimenez2024swebench, jain2025livecodebench}. One particular frontier within this domain is recursive self-improvement: the ability of an agent to modify its own source code to enhance its future performance.

Theoretical frameworks such as the G\"odel Machine \citep{godelmachine} propose that a program capable of provably beneficial self-modifications will converge to a global optimum. Recent empirical implementations, such as the G\"odel Agent \citep{godelagent}, SICA \citep{sica}, and the Darwin-G\"odel Machine (DGM) \citep{dgm}, have validated this potential, showing that agents can indeed navigate their own design space to refine their own implementation, with the improvement validated through competitive results on various benchmarks. However, this search process is extremely inefficient. Current methods rely on an expensive feedback loop where every candidate modification must be validated against a comprehensive benchmark suite to estimate its utility. As noted in recent works, this evolutionary process can incur astronomical costs, upwards of \$22,000 to evaluate on SWE-Bench, and consume thousands of CPU hours \citep{dgm, hgm}. 
This high barrier to entry makes research on self-improving agents less accessible and slows progress in the field.

In this paper, we identify the primary bottleneck in self-improvement as the \textbf{poor signal-to-cost trade-off} of existing evaluation methods. 
Full benchmark execution provides a reliable validation signal, but at high cost; evaluating on a small subset of benchmark tasks is cheaper, but yields a much noisier signal.
We propose that this process can be optimized by introducing intermediate, cheaper signals via a separate LLM judge tasked with rating the quality of a self-improvement proposal. We introduce Recursive \textbf{S}elf-\textbf{I}mprovement via \textbf{F}ast \textbf{T}ree Search (\textbf{SIFT}), a simple framework that dramatically reduces the time and financial cost of self-improvement while still enabling the discovery of meaningful self-improvement modifications, maintaining competitive performance gains.

Our key contributions include
\begin{enumerate}
    \item \textbf{Pairwise LLM-as-a-judge with Bradley--Terry aggregation.} We identify the per-patch evaluation as the time and cost bottleneck of the self-improvement step and replace it with pairwise LLM judgments. Each new candidate is compared against a set of incumbent agent harnesses and the win-loss record is aggregated with a regularized Bradley-Terry (BT) model that yields a global strength score for every node in the search tree, providing quick signal to expand the search tree without waiting on the expensive evaluation to complete.
    \item \textbf{Judge scores to guide speculative exploration in a fully disaggregated tree search.} We utilize a fully disaggregated pipeline to parallelize expansion and evaluation (Figure \ref{fig:disagg}). Parent nodes for the next round of self-improvement are sampled from the full archive with a weight combining BT rank, accuracy rank, and a visit-count exploration term. Simultaneously, we maintain a priority queue of next nodes to evaluate, ranked by the same aggregation metric. An orchestrator fetches the completed expansion nodes, inserts it into the evaluation queue to prioritize nodes with high potential while launching the next expansion step, dramatically speeding up tree search without being bottlenecked by slow evaluation processes, using judge signals as intermediate guides.
    \item \textbf{Empirical improvements and transferability of discovered agent harnesses across different coding models.} On the Polyglot benchmark, we achieve an improved score of 35\% using \texttt{o3-mini} and 32\% using \texttt{Qwen3-30B} as the base coding model, exceeding the performance of existing harnesses while maintaining a runtime under 250 CPU hours and a wall clock time of 7 hours for \texttt{Qwen3}, and under 50 CPU hours, 5 hours of wall clock time for \texttt{o3-mini} (Table \ref{tab:polyglot225}). Furthermore, we observe that the discovered agent harnesses transfer well across different coding models (Figure \ref{fig:polyglot_transfer}).
\end{enumerate}

\section{Related works}
Our work is closely related to an ongoing line of research on self-improving coding agents. The G\"odel machine~\citep{godelmachine} demonstrates that a program that is able to provably make useful modifications to its own code will find a globally optimum solution. The G\"odel agent~\citep{godelagent} proposes sensor-executor modules that implements the approximation of such a program in code, demonstrating improved performance across various language and reasoning benchmarks~\citep{dua2019dropreadingcomprehensionbenchmark, mmlu, gpqa, mgsm}. SICA~\citep{sica} enforces the self-improvement agent to identically match the agent used during benchmark evaluations, and shows significant performance gains in coding~\citep{jimenez2024swebench, jain2025livecodebench}. STOP~\citep{zelikman2024selftaught} uses a meta-utility function to guide a seed improver to discover multiple self-improvement strategies, such as simulated annealing and genetic algorithms. The Darwin-G\"odel Machine~\citep{dgm} introduces open-endedness by maintaining an archive of well-performing agents to improve diversity in the agentic design search space. The Huxley-G\"odel machine~\citep{hgm} proposes the Clade Meta Productivity~\citep{cmp_huxley} metric to sample the next agent based on the performance of a parent and all its descendants. More recently, Group Evolving-Agents~\citep{weng2026groupevolvingagentsopenendedselfimprovement} implement crossover between different branches in the search tree via applying the self-improvement step within a group of leaf nodes. Red Queen G\"odel Machine~\citep{iacob2026redqueengodelmachine} makes use of a reviewer agent on each Polyglot task to supply auxiliary signal before running the full Polyglot evaluation and evolves the reviewer agent together with the coding agent. In SIFT, we apply the the judge signal for comparison between different nodes in the tree search, obtaining a faster signal before any task evaluations even complete, allowing for efficient exploration during the tree search process.

More broadly, this line of work on self-improving coding agents aims to efficiently explore the design space of harness design for agents. ADAS~\citep{adas} demonstrates a Meta Agent Search algorithm that exceeds SOTA performance of human-designed agentic systems across several comprehension and reasoning tasks. AFlow~\citep{zhang2025aflow} explores the design space of code-based workflows using Monte Carlo Tree Search, simultaneously achieving performance improvements and cost reductions against stronger LLM backbones. Such work has also been extended to multi-agent systems via MaAS~\citep{maas}.

Pairwise preferences and the Bradley--Terry model~\citep{bradleyterry1952} have a long history in ranking under noisy feedback, and are used pervasively in LLM-as-a-judge pipelines~\citep{llmasajudge, codejudebench} because pairwise calls are calibration-free and more reliable than absolute rubric scores. We import this machinery into self-improvement: instead of asking the judge for an absolute score and taking top-$k$, we accumulate pairwise wins across the tree and fit a regularized BT model whose ranks drive parent sampling.

\section{Methods}
\begin{figure}[hbt]
\vspace{-6mm}
\begin{centering}
    \includegraphics[width=\linewidth]{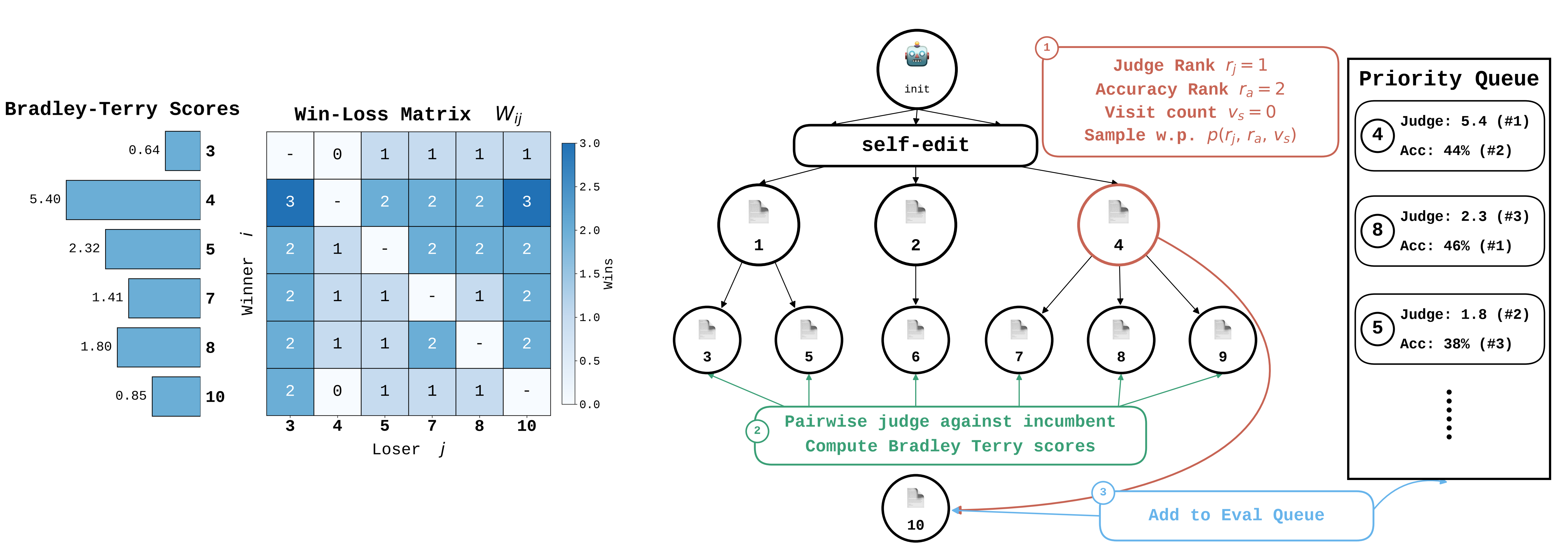}
    \caption{Illustration of the SIFT pipeline. In a single expansion step, node $i$ is sampled with probability $P(i) \propto \exp(-(\alpha r_{b}(i) + \beta r_{a}(i) + \eta \log(1+v_i)))$, self-improves, then judged against previous versions in the existing agent archive. The win/loss record is updated and the BT scores of each node is solved numerically. Then the new agent is inserted into the priority queue for evaluation on downstream tasks, prioritizing higher potential nodes.}
    \label{fig:pipeline}
\end{centering}
\vspace{-6mm}
\end{figure}

\paragraph{Recursive Self-Improvement Loop.}
A self-improving coding agent is a recursive loop between two roles. The coding model is the agent's runtime backbone, reading source, editing files, running tests, and invoking tools to solve downstream tasks. The self-improving model sits one level above: given the current code and diagnostic context from prior runs (e.g. evaluation logs and failure traces), it identifies weaknesses and writes a patch, producing a child agent. The child is then evaluated by running the coding model under the new agent harness, and the resulting signal from downstream tasks feeds the next round of self-modification. Because each patch reshapes the very code that orchestrates the coding model, improvements compound recursively.
Prior systems including SICA \citep{sica}, DGM \citep{dgm}, and HGM \citep{hgm} instantiate this loop, differing primarily in how they search the resulting tree of self-modifications.
In practice, the two roles can be filled by different LLMs, often with a cheaper backbone handling the many coding-model calls during evaluation.
This loop is powerful but expensive: each candidate patch must be benchmarked before the next round, and benchmark runs dominate wall-clock time.

To address this, we introduce \textbf{SIFT} (Recursive \textbf{S}elf-\textbf{I}mprovement via \textbf{F}ast \textbf{T}ree-search), a sample-efficient framework that decouples exploration from expensive benchmark evaluation. 
SIFT maintains an archive of candidate agent harnesses and uses cheap LLM-based judge feedback to decide which branches of the self-improvement tree to expand. Each newly generated node is compared pairwise against a small set of strong existing nodes. The resulting win-loss records are aggregated with a regularized Bradley-Terry (BT) model, producing a judge-based strength score for every node. During search, parent nodes are sampled according to a rank-based mixture of judge-based strength, subset benchmark accuracy, and a visit-count penalty that encourages exploration using the following: 
\begin{equation}
\label{eq:prob}
P(i) \propto
\exp\!\left(
-\alpha r_b(i)
-\beta r_a(i)
-\eta \log(1+v_i)
\right),
\end{equation}
where $\alpha$ and $\beta$ control the influence of judge and accuracy ranks, and $\eta$ controls the strength of the visit-count exploration penalty.
Thus, expensive benchmark evaluations are reserved 
for candidates that achieve high preliminary accuracy and are judged promising by the LLM.

\paragraph{LLM-as-a-judge.}
After a new child node is generated, SIFT estimates its potential utility using a judge model before running a benchmark evaluation on it. The judge never sees benchmark tasks or task outcomes. Instead, it compares the new candidate against one existing agent at a time, using candidate agent implementations as input.
In Table~\ref{tab:judge_prompt_smoke_override}, we compare several input formats for the judge model. 
The ``Diffs'' variant provides the root-agent implementation followed by the chain of \texttt{diff}s leading to the final implementation of each agent, while the ``Full files'' variant provides the complete source files for the two candidate agents. 
Each comparison returns a preference between the two agents, which SIFT records as a single win or loss.

\paragraph{Comparison-target selection.}
To keep judging cost bounded as the tree grows, each new candidate is compared against only a small set of existing nodes. These comparison targets are chosen from the current archive by favoring agents that already look strong under the same signals used for parent selection: judge rank and benchmark-accuracy rank. In practice, we limit our comparison to the top 10 nodes in the archive. This concentrates judge effort on comparisons against relevant frontier nodes rather than repeatedly comparing against weak or already-dominated agents.

\paragraph{Bradley--Terry aggregation.}
The Bradley--Terry (BT) model is a classical model for converting noisy pairwise comparisons into a global ranking~\citep{bradleyterry1952}. In our setting, each node in the search tree corresponds to an agent version, and each judge call produces a pairwise preference between two such versions. BT assumes that every node $i$ has a latent positive strength parameter $\theta_i > 0$, and that the probability that node $i$ is preferred to node $j$ is
\[
    P(i \succ j)
    =
    \frac{\theta_i}{\theta_i + \theta_j}.
\]
This gives a principled way to pool sparse and potentially inconsistent judge preferences: rather than treating each comparison independently, all wins and losses are explained jointly by a single set of latent strengths.

SIFT aggregates judge preferences in a global win-count matrix $W$, where $W_{ij}$ is the number of times node $i$ has been preferred to node $j$. After each judging round, we refit a regularized BT model to convert the accumulated pairwise outcomes into one judge-strength score for every node. 
To account for limited comparison data for newly generated nodes we include a regularizer $\lambda$ which acts as a pseudo-count.
We further normalize all scores so that $\sum_i \theta_i = n$, where $n$ is the number of candidate patches.
The resulting BT strengths are used only through their induced ranks in the parent-sampling rule below, which makes the search less sensitive to the absolute scale of the fitted scores.

\paragraph{Rank-based parent sampling.}
\label{sec:sample}
SIFT uses judge feedback only through ranks, not raw judge scores. 
Using the measured Bradley-Terry score and benchmark accuracy we derive measures of the ranking denoted $r_b(i)$ and $r_a(i)$ respectively.
If a node is still waiting for evaluation, it temporarily inherits its parent's accuracy for sampling purposes. Parent nodes are then sampled from the archive according to \eqref{eq:prob} where $v_i$ is the number of times node $i$ has already been selected as a parent. The first two terms bias search toward nodes that look strong under judge and benchmark signals. The visit-count term downweights nodes that have already been explored many times, encouraging the search to continue branching rather than repeatedly refining the same lineage.

\paragraph{Disaggregated pipeline.}
\begin{figure}
\centering
\includegraphics[width=0.75\linewidth]{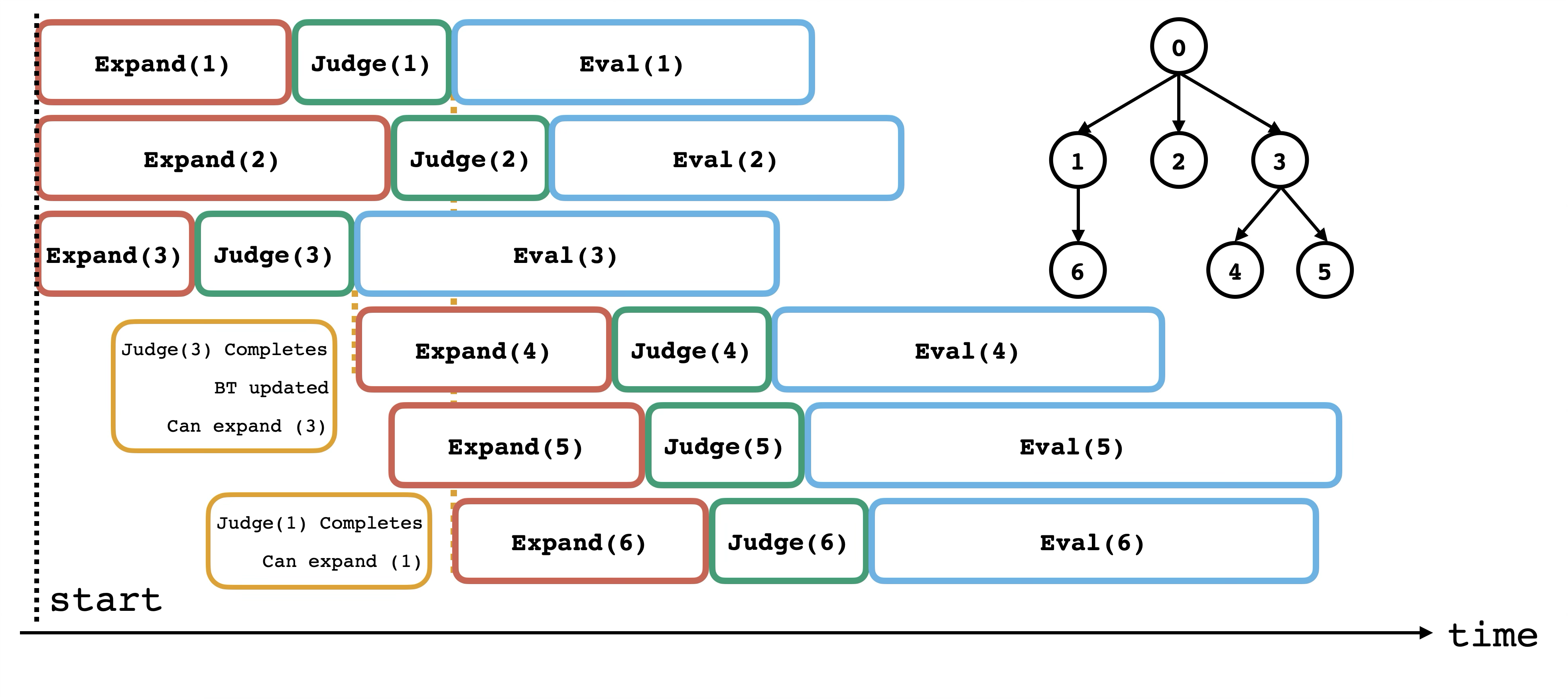}
\caption{Illustration of the disaggregated pipeline. SIFT allows expansions of nodes that have yet to complete full evaluation but received a strong signal from the judge.}
\label{fig:disagg}
\end{figure}
SIFT runs a non-blocking master loop (Algorithm~\ref{alg:sift}, Figure~\ref{fig:disagg}). On each iteration, it (1) samples a new node to expand according to Eq.~\ref{eq:prob} and runs it through an easy subset to filter out broken patches immediately, (2) compares the improved agent against strong incumbent agents and refits the BT model to obtain the BT scores $b_i$ and ranks $r_b(i)$ sorted by BT strength, and (3) appends the new agent into the evaluation priority queue, sorted by $r_b(i) + r_a(i)$, where nodes that pass the easy subset inherit $a_i$ from their parents. The evaluator and expansion runs proceed in parallel, allowing expansions to continue without waiting for full evaluation to complete.

Similar implementations of such disaggregated pipelines have also been explored in \citep{hgm}, which decides which nodes to expand and which to evaluate through Thompson sampling. However, we observe that this implementation often leads to inaccurate estimations of the final accuracy due to random sampling of tasks used for evaluation, where nodes that received easier tasks would have high scores simply just by chance. Furthermore, the signal obtained through a single downstream task evaluation is extremely noisy, and still requires significant amount of time to complete. To get the best of both worlds of (1) high signal to discriminate between candidate nodes and (2) fast tree search that is not bottlenecked by downstream evaluations, SIFT makes use of the judge signal to guide the expansion process before evaluation runs complete. We use a fixed subset to compare accuracies between nodes in order to provide fair evaluations for each node that is comparable between nodes, yet maintaining efficiency by allowing fully disaggregated expansion and evaluation to run in parallel.

\subsection{Cost estimation of self-evolution.}
Each node in the search tree records three cost phases: \texttt{expansion}, \texttt{judge} and \texttt{eval}. At default settings with maximum pairwise comparisons per node of $K=10$, a single node costs roughly 10 pairwise calls at a few cents each, which is an order of magnitude cheaper than a full Polyglot-50 evaluation (see Table~\ref{tab:cost_analysis}). Existing self-improvement frameworks such as DGM make use of the benchmark evaluation across the 50 problem subset, which accounts for majority of the cost in the evolution process\footnote{Authors of previous works typically switch to a more cost-efficient coding model to lower the overall cost of the tree search, dominated by the downstream task evaluation}. The use of a judge model allows a much faster and cost-efficient way to obtain an intermediate signal of the strength of the child node.

\begin{table}[ht]
\centering
\caption{Resource consumption breakdown per step. Self-improve and LLM-judge costs and durations are reported per task, averaged across 50 tasks, using \texttt{gpt-5.4}. A single judge \emph{comparison} is one call on a pair; a full judging round for one new node costs up to $K=10$ such calls.}
\begin{tabular}{lrrr}
\toprule
\textbf{Module} &  \textbf{LLM} & \textbf{Cost (USD)} & \textbf{Time (CPU Hours)} \\
\midrule
Self-Improve (expansion) & \texttt{gpt-5-mini}      & 0.12   & 0.186  \\
LLM-judge (pairwise call)  & \texttt{gpt-5.4}   & 0.044  & 0.0042 \\
Polyglot-50 full eval   & \texttt{o3-mini}        & 6.0    & 2.6    \\
\bottomrule
\end{tabular}
\label{tab:cost_analysis}
\end{table}

\vspace{-4mm}
\section{Experiments}
We report our key results across various coding benchmarks: SWE-bench\citep{jimenez2024swebench}, Polyglot~\citep{polyglot} and Terminal Bench~\citep{merrill2026terminalbenchbenchmarkingagentshard}. SIFT is built on top of the DGM~\citep{dgm} harness, and we report comparisons to the DGM and HGM~\citep{hgm} baselines.
\vspace{-4mm}

\begin{figure}[hbt]
\centering

\begin{subfigure}[t]{0.48\linewidth}
    \centering
    \includegraphics[width=\linewidth]{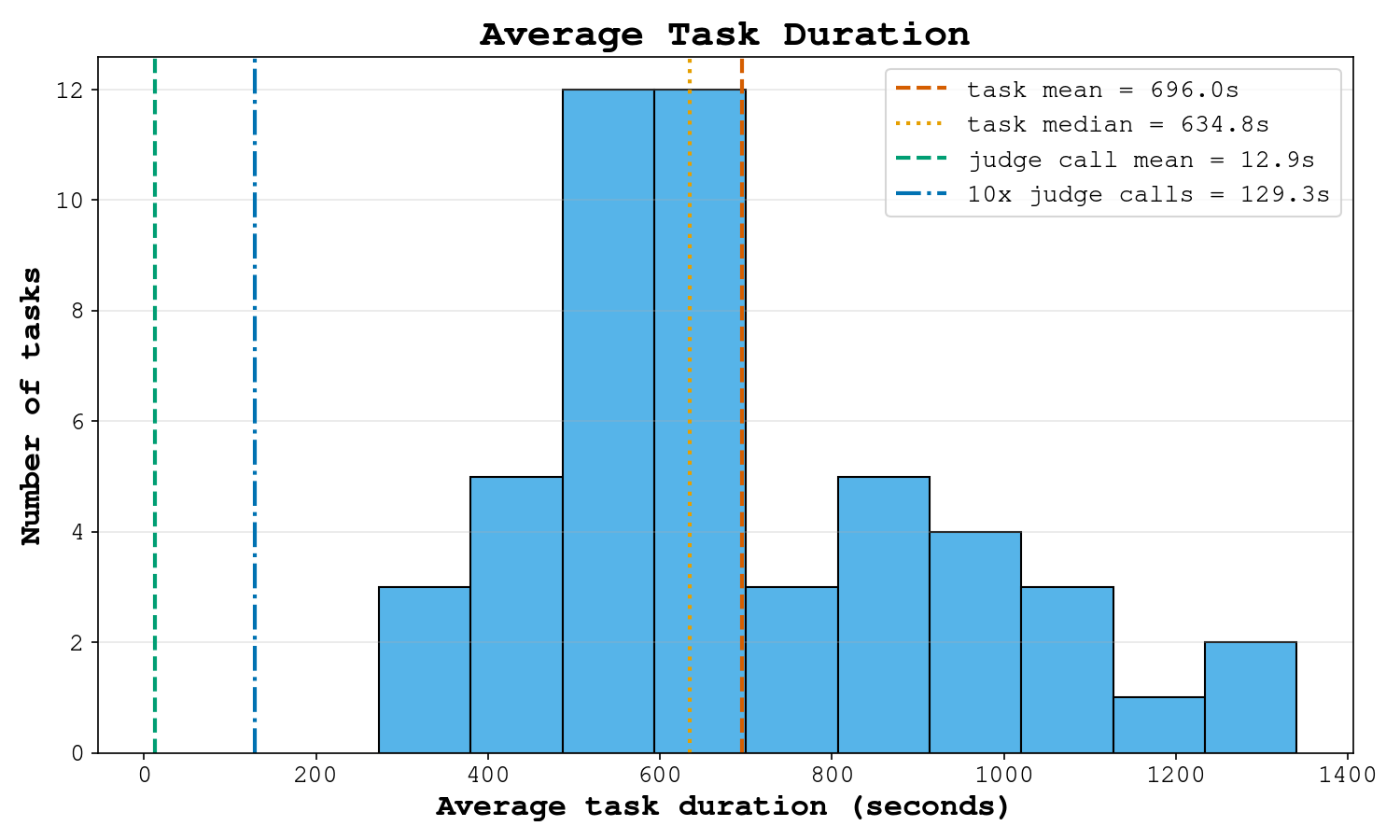}
    \caption{Task duration distribution on Polyglot. The average set of pairwise judge comparisons against the top 10 incumbent patches complete in a significantly shorter period of time than even the first evaluation.}
    \label{fig:task_duration_histogram}
\end{subfigure}
\hfill
\begin{subfigure}[t]{0.48\linewidth}
    \centering
    \includegraphics[width=\linewidth]{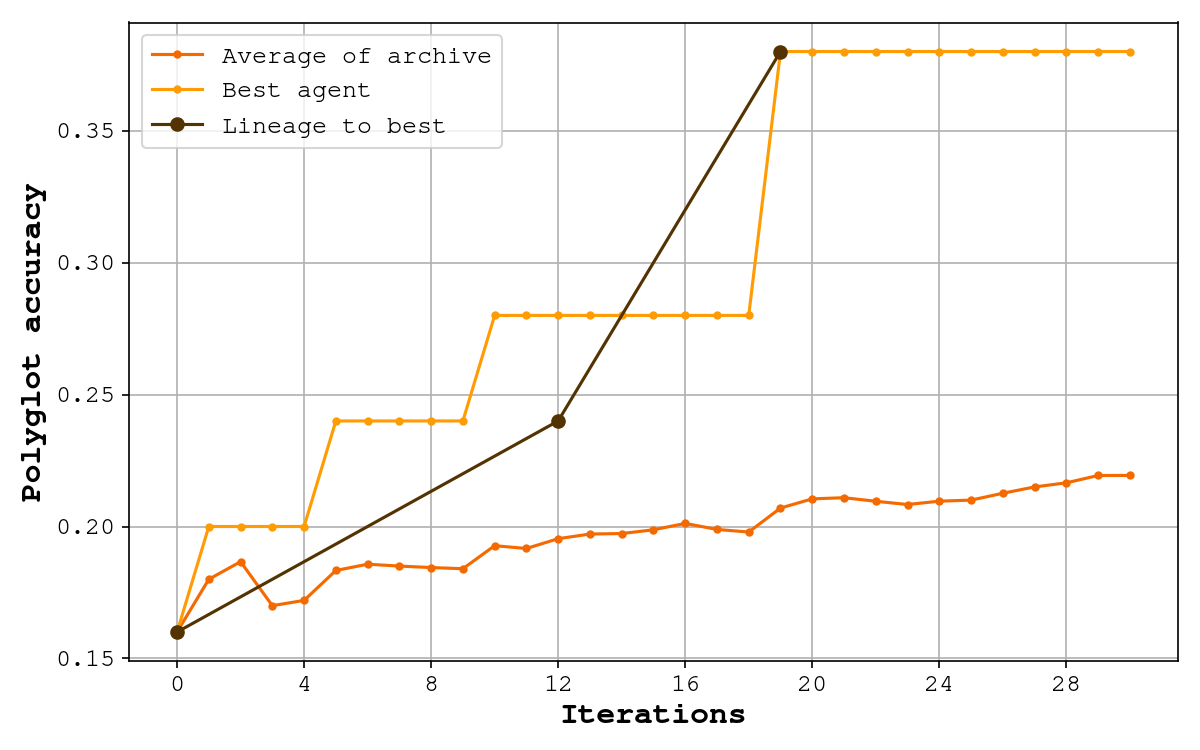}
    \caption{Evolution progress of SIFT using \texttt{Qwen3-30B} as the base coding model and \texttt{Qwen3-480B} as the self-improvement and judge model.}
    \label{fig:qwen_qwen_plot_progress}
\end{subfigure}

\caption{Evolution progress of SIFT. We plot Polyglot-50 accuracy against iteration number, running for 30 evolution steps, improving from an accuracy of 16\% to 38\% on Polyglot-50. The overall average accuracy of the archive across Polyglot-50 steadily increases with iteration number. The best descendant achieves a final Polyglot accuracy of 31.1\%.}
\label{fig:qwen_qwen_plot}

\end{figure}

\subsection{Experimental setting.}
\paragraph{Polyglot}

Aider Polyglot~\citep{polyglot} is a multi-language benchmark across \texttt{c++, go, rust, java, javascript, python}. The full benchmark has 225 tasks. 
Our models and harness in our experiments follow DGM~\citep{dgm} and HGM~\citep{hgm}.  
In the search process, DGM and HGM use different intermediate signals as indicators of the strength of a node: DGM evaluates nodes on sets of increasing size, beginning with a small subset of 10 tasks to filter out catastrophically bad patches, followed by a medium subset of 50 tasks, and reserves the full 225-task benchmark for held-out final evaluation.
In contrast, HGM dynamically adjusts the number of tasks evaluated for each node, sampling tasks at random from the full benchmark.
We adopt the DGM evaluation protocol, but use a smaller 4-task gate, followed by a 50-task subset, which we refer to as Polyglot-50, as the intermediate search signal. 
We reserve the full 225-task benchmark for held-out final evaluation.
All experiments run in sandboxed Docker containers.

We evaluate SIFT against existing self-improvement harnesses under matched settings on Polyglot-50. 
Following the setup in \citet{hgm}, we use Qwen3-Coder-30B-A3B-Instruct \citep{qwen3technicalreport} (referred to as \texttt{Qwen3-30B} throughout) as the coding model, and \texttt{Qwen3-480B} as both the expansion model and the LLM judge. 
For the DGM setup, the coding model is \texttt{o3-mini} and the self-improvement model is \texttt{gpt-5-mini}. The original DGM paper uses \texttt{Claude-3.5-Sonnet-v2} for the self-improvement role, but it is now deprecated; we substitute \texttt{gpt-5-mini} as a comparable proxy (Polyglot accuracy 34.7\% vs. 32.0\% for Claude-3.5-Sonnet-v2). 
Our independent evaluations of the initial harness matches those reported by the original authors, with DGM achieving 14.2\% and HGM achieving 20.0\%.
To compare fairly under the constraint of 800 evaluations in ~\cite{hgm}, we limit the total number of expansion steps to 30 steps. The evolution progress is shown in Fig ~\ref{fig:qwen_qwen_plot}, where the best found node achieves an accuracy of 31.1\% after a total of 34.3USD in API cost and a wall clock time of 6.71h with a total of 224 CPU hours, a tenth of that of the baseline of DGM. This also achieves a higher accuracy at a lower CPU hour count than a different asynchronous tree search implementation of HGM which reaches 30.5\% accuracy at 347 CPU hours. Our accuracies also surpass that of SICA. The full results are shown in Table ~\ref{tab:polyglot225}.

\begin{table}[ht]
\centering
\caption{Polyglot-225 accuracy and comparison to several baselines across different coding and judge models. SIFT outperforms other evolution frameworks such as SICA, DGM and HGM.}
\label{tab:polyglot225}
\small
\setlength{\tabcolsep}{6pt}
\begin{tabular}{llcc}
\toprule
\textbf{Method} & \textbf{Coding model} & \textbf{Judge model} & \textbf{Polyglot Accuracy (\%)}  \\
\midrule
Base Agent & \texttt{Qwen3-Coder-30B} & \texttt{None} & 20.0 \\
SICA       & \texttt{Qwen3-Coder-30B} & \texttt{None} & 25.1 \\
DGM        & \texttt{Qwen3-Coder-30B} & \texttt{None} & 27.1  \\
HGM        & \texttt{Qwen3-Coder-30B} & \texttt{None} & 30.5  \\
 SIFT      & \texttt{Qwen3-Coder-30B} & \texttt{Qwen3-Coder-480B} & 31.1 \\
\textbf{SIFT}       & \texttt{Qwen3-Coder-30B} & \texttt{gpt-5.4} & \textbf{32.0}  \\

\specialrule{0.08em}{3pt}{3pt}

Base Agent & \texttt{o3-mini} & \texttt{None} & 14.2 \\
DGM        & \texttt{o3-mini} & \texttt{None} & 30.7 \\
SIFT       & \texttt{o3-mini} & \texttt{None} & 29.8 \\
\textbf{SIFT}       & \texttt{o3-mini} & \texttt{gpt-5.4} & \textbf{35.1}  \\
\textbf{SIFT}       & \texttt{o3-mini} & \texttt{gpt-5-mini} & 31.6  \\
\bottomrule
\end{tabular}
\end{table}

\begin{table}[ht]
\centering
\caption{Run-level resource use for the three SIFT configurations reported in Table~\ref{tab:polyglot225}. Judge cost is included in total API cost. Each row is a single search run.}
\label{tab:polyglot_run_costs}
\small
\resizebox{\linewidth}{!}{%
\begin{tabular}{lrrrrr}
\toprule
\textbf{Configuration} & \textbf{Expansion steps} & \textbf{API cost} & \textbf{Of which judge} & \textbf{CPU-h} & \textbf{Wall-clock} \\
\midrule
\texttt{Qwen3-30B} / \texttt{Qwen3-480B} judge & 30 & \$34.3 & \$4.1 & 224 & 6.7 h \\
\texttt{Qwen3-30B} / \texttt{gpt-5.4} judge & 30 & \$33.7 & \$9.7 & 188 & 6.8 h \\
\texttt{o3-mini} / \texttt{gpt-5.4} judge & 20 & \$86.8 & \$15.7 & 59 & 2.1 h \\
\bottomrule
\end{tabular}%
}
\end{table}

\begin{figure}[hbt]
\centering
\begin{subfigure}[t]{0.48\linewidth}
    \centering
    \includegraphics[width=\linewidth]{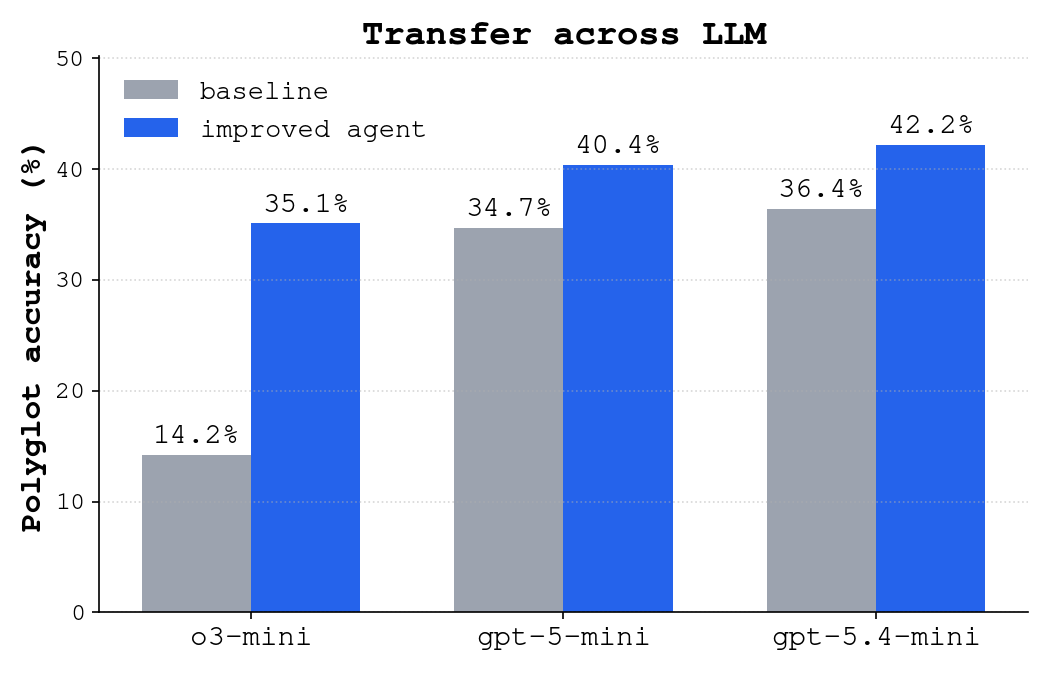}
    \caption{\texttt{o3-mini} to other LLMs}
    \label{fig:o3_transfer}
\end{subfigure}
\hfill
\begin{subfigure}[t]{0.48\linewidth}
    \centering
    \includegraphics[width=\linewidth]{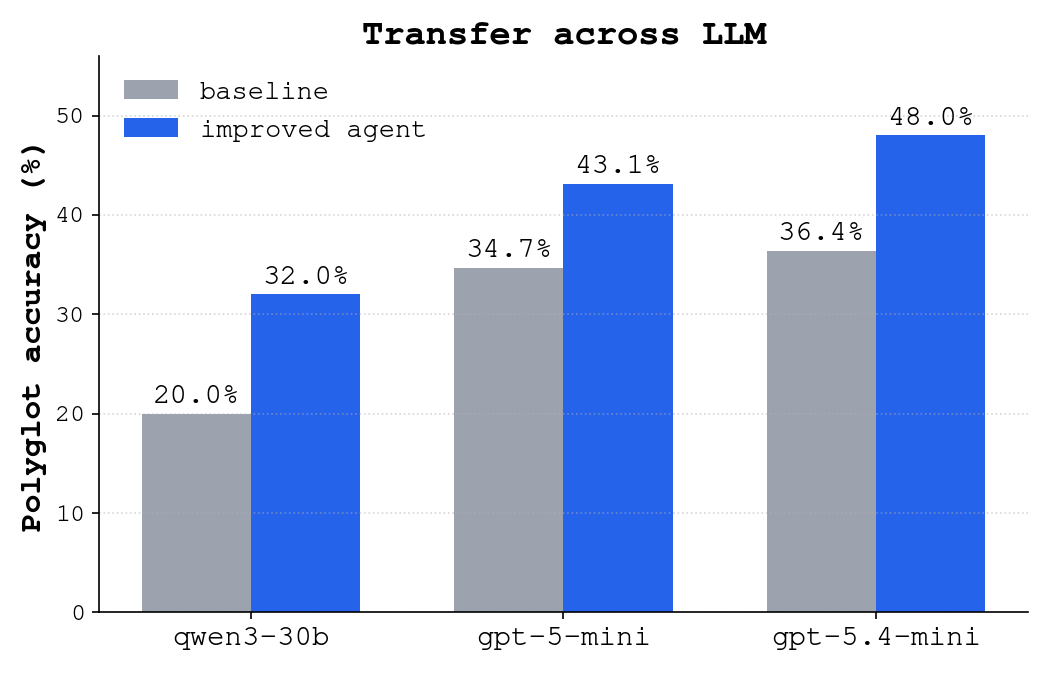}
    \caption{\texttt{Qwen3} transfer to other LLMs}
    \label{fig:qwen3_transfer}
\end{subfigure}
\caption{Transfer between LLMs of the best Polyglot agent found by SIFT. The improved agents in SIFT generalizes to \texttt{gpt-5-mini} and \texttt{gpt-5.4-mini}, where the improved agent consistently achieves a higher accuracy than the base coding agent.}
\label{fig:polyglot_transfer}
\vspace{-4mm}
\end{figure}

Furthermore, the proposed judge plus disaggregation workflow also transfers to other starting harnesses. Using the DGM initial harness with \texttt{o3-mini} as the base coding agent, we achieve an improvement to 35.1\% within 30 steps of expansion, exceeding the 30.7\% result obtained by DGM after 80 nodes of tree search. This experiments completes in under 5 hours of wall clock time, 42 CPU hours and 150 USD of API credits. We repeat the experiment on \texttt{o3-mini} 3 more times and obtain the results in the range $[32.0\%, 35.6\%]$, consistently outperforming the DGM evolution framework with lower resource consumption.

We also observe that these improvements transfer across LLMs. We take the improved agent harnesses discovered by SIFT with \texttt{o3-mini} and \texttt{Qwen3-30B} as the base coding model. 
We re-evaluate these agent harnesses with \texttt{gpt-5-mini} and \texttt{gpt-5.4-mini} as the coding model on Polyglot. 
As shown in Figure~\ref{fig:polyglot_transfer}, the SIFT-discovered harnesses consistently improve Polyglot performance. 
This shows that SIFT is not only fast and efficient, but also discovers robust and transferable agent harnesses, rather than merely optimizing for the particular coding model used to obtain benchmark feedback during tree search.

\subsection{TerminalBench 2.1}
\label{app:terminalbench}
We next evaluate SIFT on TerminalBench~\citep{merrill2026terminalbenchbenchmarkingagentshard}, a heterogeneous benchmark of long-horizon tasks executed and graded inside task-specific terminal containers. 
Again we use \texttt{gpt-5-mini} as the coding model, \texttt{gpt-5} for diagnosis and self-improvement, and \texttt{gpt-5.4-high} as the pairwise judge. 
During search, candidate evaluations use a fixed randomly sampled 50-task subset and we employ more stringent timeout requirements.
Namely the maximum time we allow for each agent is the minimum of the TerminalBench task timeout or 30 minutes. TerminalBench tasks vary in their allowed 

\begin{table}[t]
\centering
\small
\caption{TerminalBench selection comparison. ``Search eval'' is the score observed on the 50-task search subset. Repeated means average three independent full evaluations of each selected agent.}
\label{tab:terminalbench_picks}
\begin{tabularx}{\linewidth}{ccc>{\centering\arraybackslash}X}
\toprule
Setting & Agent & Search eval & Repeated mean \\
\midrule
Initial agent & Starting point & 14/50 & 26.0/89 (29.2\%)\\
SIFT with judge & Judge rank 1 
& 18/50 & 32.7/89 (36.7\%)\\
SIFT with judge & Accuracy rank 1 
& 19/50 & 25.0/89 (28.1\%) \\
No-judge ablation & Accuracy rank 1 
& 19/50 & 26.0/89 (29.2\%) \\
\bottomrule
\end{tabularx}
\end{table}

We compare one judge-guided search and one no-judge ablation, both capped at 30 expansions and initialized from the same starting agent which scores 14/50.
At the end of the search we selected three agents and performed three full evaluations on them: the highest scoring agent of the SIFT run, the agent the judge most preferred from the SIFT run, and the highest scoring agent from the no-judge ablation.
We present these results in Table~\ref{tab:terminalbench_picks}. Here we see that although the judge-selected nodes performed slightly worse during search, it exceeded both other agents on the complete benchmark under repetition.

\begin{figure}[!htb]
\centering
\includegraphics[width=\linewidth]{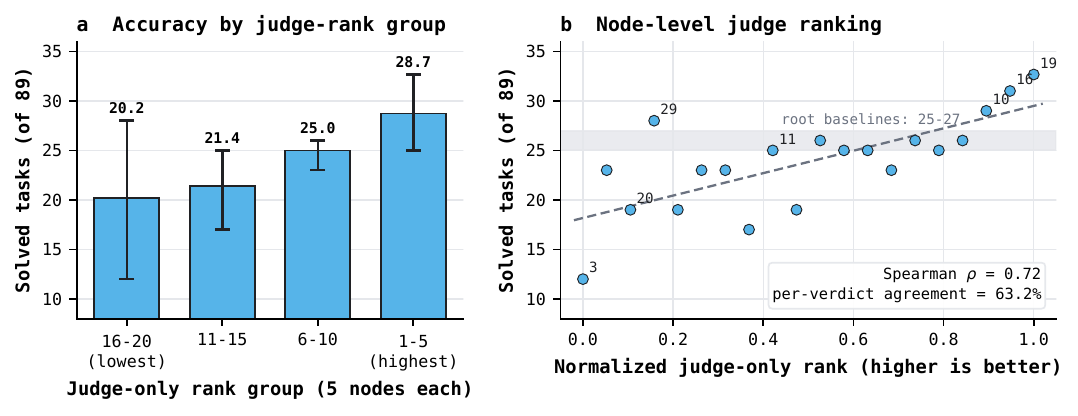}
\caption{TerminalBench judge diagnostics for the fully evaluated agents in the clean judge-guided run. \textbf{Left:} nodes are sorted by judge-only Bradley-Terry rank and divided into four groups; bars show each group's mean realized accuracy and points show its individual nodes. \textbf{Right:} the underlying node-level judge rank versus realized accuracy.}
\label{fig:terminalbench_appendix}
\end{figure}

To further measure the ranking ability of the judge, we fully evaluated each agent in the judge-guided search on the full benchmark and computed the Spearman correlation.
In Figure~\ref{fig:terminalbench_appendix} we present these results.

\paragraph{Judge model strength.}
We repeat the judge-guided search with the weaker \texttt{gpt-5} as the judge, keeping the coding, diagnosis, and self-improvement models fixed, and evaluate its archive on the full benchmark in the same way.
Table~\ref{tab:terminalbench_judge_strength} compares the two judge models and the no-judge ablation.
The weaker judge retains most of the global ranking signal: its BT rank correlates with full-benchmark score at $\rho=0.71$ across its 20 fully evaluated candidates, essentially matching \texttt{gpt-5.4-high}, and its search still produces an agent that repeats at 34.5\%, well above the no-judge search.
However, we find the performance of the two judges separate at the top of the ranking: within the BT top-5, \texttt{gpt-5}'s pairwise agreement with the ground-truth ordering falls to 0.50 against 1.00 for \texttt{gpt-5.4-high}, and unlike \texttt{gpt-5.4-high} its top-ranked candidate is not the best agent its search produced.
A weaker judge therefore still steers the search toward strong candidates but is less reliable for the final selection among them. This suggests an alternative tiered configuration in which a cheaper judge performs the bulk of comparisons and a stronger judge orders the frontier may be viable.

Taken together these results suggest that the judge is most useful when it comes to providing a ranking signal rather than as a surrogate for benchmark accuracy. Search-time accuracy alone does not reliably identify the best agent, whereas BT ranking appears to recover a substantially stronger candidate and remains informative across judge models. The stronger judge is particularly valuable near the top of the frontier, where small ranking errors directly affect which final agent is selected.

\newpage

\begin{table}[t]
\centering
\small
\caption{TerminalBench judge-model comparison, one search run per condition. ``Best agent found'' is the highest-scoring agent in each run's archive, averaged over three independent full-benchmark evaluations, and ``Gain'' is its percentage-point improvement over the starting agent (29.2\%). Recall@5 counts members of the ground-truth top-5 present in the judge's BT top-5; $\rho$ is the Spearman correlation between BT rank and full-benchmark score over the run's 20 fully evaluated candidates; pairwise (top-5) is agreement with the ground-truth ordering within the BT top-5.}
\label{tab:terminalbench_judge_strength}
\begin{tabular}{lccccc}
\toprule
Judge & Best agent found & Gain & Recall@5 & $\rho$ & Pairwise (top-5) \\
\midrule
\texttt{gpt-5.4-high} & 36.7\% & +7.5 & 4/5 & +0.72 & 1.00 \\
\texttt{gpt-5} & 34.5\% & +5.3 & 3/5 & +0.71 & 0.50 \\
None & 29.2\% & +0.0 & --- & --- & --- \\
\bottomrule
\end{tabular}
\end{table}

\subsection{Analysis}
\label{sec:ablations}

\paragraph{SIFT Speedups}
\begin{wrapfigure}{r}{0.58\textwidth} 
\centering 
\includegraphics[width=\linewidth]{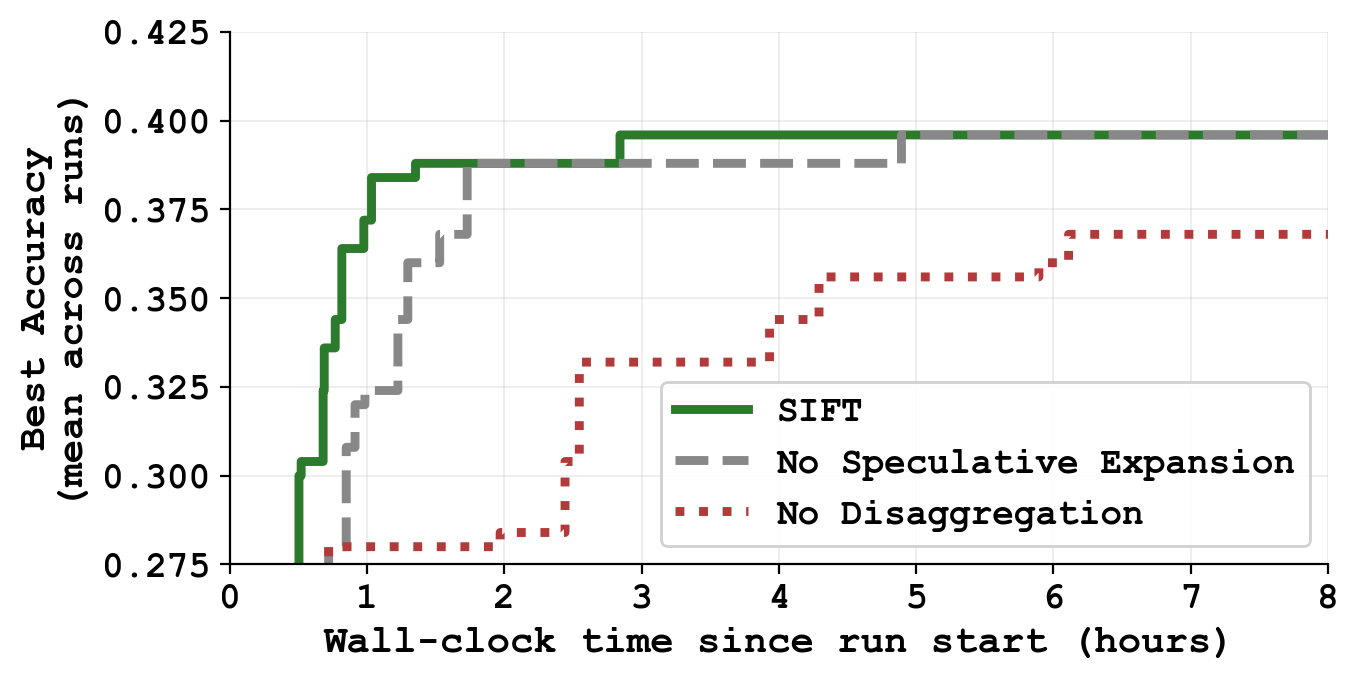} 
\caption{Speedups in SIFT, averaged across five runs.} 
\label{fig:speedups} 
\vspace{-2mm} 
\end{wrapfigure}
SIFT utilizes two main ingredients to achieve speedups across separate runs: the disaggregation pipeline to allow for parallel expansion and evaluation, and allowing the speculative expansion of nodes that are judged but not fully expanded. 

Figure ~\ref{fig:speedups} show the efficiency gains from each of these components, averaged across five runs of SIFT. We observe that the asynchronous pipeline accounts for majority of the speedups, and on top of this, the speculative expansion of unevaluated parent nodes according to the judge rankings allow SIFT to arrive at a higher accuracy in a shorter period of time, further improving tree search efficiency within the disaggregated pipeline.

\paragraph{Qualitative insights}
\begin{wrapfigure}{r}{0.58\textwidth} 
\centering 
\includegraphics[width=\linewidth]{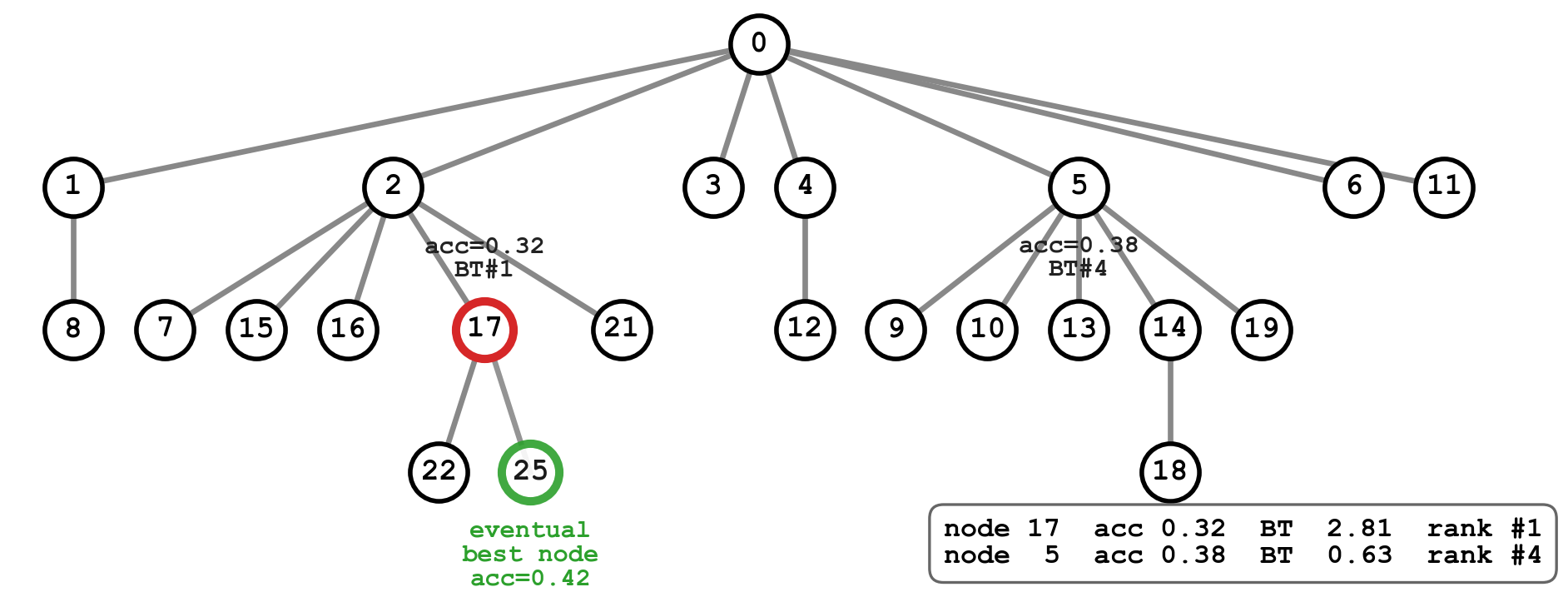} 
\caption{Qualitative insights into judge preferences.} \label{fig:qualitative} 
\vspace{-6mm} 
\end{wrapfigure}

In an open-ended tree search, the judge provides guiding signal to explore lower accuracy but more promising nodes instead of exploiting the best found accuracy. 

We demonstrate in Figure~\ref{fig:qualitative} a specific instance in which the judge model provides a intermediate guiding signal to explore a promising node. At the specific snapshot of the expansion process of the tree illustrated in Figure~\ref{fig:qualitative}, node 17, which is ranked lower in terms of accuracy, over other existing nodes, achives the highest BT judge score of 2.81, prioritizing it for further expansion. The expansion of node 17 then spawns a children node 25, which eventually achieves the highest accuracy of 42\%. In particular, at the instant of completion of the tree search expansion to 30 nodes, the full lineage of the nodes \texttt{2} $\to$ \texttt{17} $\to$ \texttt{25} achieves BT judge scores of 1.484, 1.93, and 1.93 (ranking 2, 1, and 1) respectively, showing that the judge consistently ranks the lineage highly, enabling recursive self-improvement along a promising exploration direction. 

For TerminalBench 2.1 we found the judge preferred a gap between the highest performing node under full-evaluation and the highest performing node during search. For the latter we observed the main addition was an instruction-derived local verifier with multi-attempt refinement. However this was gated behind an environment flag that defaults to off, and its rewritten bash tool started a fresh interactive shell on every call while claiming persistent state. The judge identified both properties from the code alone, noting that the verifier ``likely does nothing on default runs'' and that the shell rewrite is ``a major runtime risk on the actual execution path.'' The highest performing node developed a persistent shell with explicit timeouts, process-group cleanup and output truncation, plus bounded prompt changes that encourage at most two validation passes. In this case the failure mode the judge caught was invisible to any downstream evaluation score and thus made selection by highest accuracy unreliable here.

\paragraph{Judge input format.}
We ablate how candidate agents are presented to the LLM judge. Since SIFT uses judge scores only through their induced ranks, the most important property of the judge is not calibration, but whether its ranking agrees with downstream benchmark performance. We therefore measure the Pearson and Spearman correlation between each node's Bradley--Terry score and its realized Polyglot accuracy across 50 non-root nodes from the \texttt{Qwen3-Coder-30B} Polyglot tree-search run.

Table~\ref{tab:judge_prompt_smoke_override} shows that the representation of the candidate patch has a substantial effect on judge quality. Providing only the sequence of diffs gives a useful but relatively weak ranking signal, with Spearman correlation around $0.40$. Adding a swap-order pass, where the same pair is judged in both presentation orders to ensure symmetry across the presentation orders to break any positional biases, slightly improves rank correlation but roughly doubles judging cost. In contrast, showing the judge the resulting full files yields a much stronger signal: Spearman correlation increases to $0.68$, while the cost remains far below a benchmark evaluation. The full-file representation likely helps because it lets the judge reason about the final behavior of the agent directly, rather than reconstructing that behavior from a chain of patches. Swap-ordering full files gives nearly the same rank correlation, suggesting that most of the gain comes from the richer program context rather than from reducing presentation-order bias. We adopt the full-file format without swap for our implementation.

\begin{table}[ht]
\vspace{-5mm}
\centering
\caption{
Judge-input ablation for \texttt{Qwen3-Coder-480B} on the Polyglot tree-search run in Fig.~\ref{fig:qwen_qwen_plot}. 
We report the correlation between Bradley--Terry judge score and realized Polyglot accuracy over 50 non-root nodes. 
Spearman correlation is the primary metric because SIFT uses rank-based parent sampling.
}
\label{tab:judge_prompt_smoke_override}
\small
\setlength{\tabcolsep}{5pt}
\begin{tabular}{lccc}
\toprule
\textbf{Judge input variant} & \textbf{Pearson $r$} & \textbf{Spearman $\rho$} & \textbf{Cost per comparison (USD)} \\
\midrule
Diffs                    & $+0.14$ & $+0.40$ & $0.0076$ \\
Diffs + swap-order       & $+0.11$ & $+0.43$ & $0.014$  \\
\textbf{Full files}      & $\mathbf{+0.45}$ & $\mathbf{+0.68}$ & $0.011$ \\
Full files + swap-order  & $+0.37$ & $+0.67$ & $0.021$ \\
\bottomrule
\end{tabular}
\end{table}

These results motivate our default judge configuration: we use full-file pairwise comparisons without swap-ordering. This choice gives the strongest rank signal per dollar among the tested variants and preserves the main advantage of SIFT: providing fast, cheap guidance for tree search before committing to expensive downstream evaluations.

\vspace{-2mm}
\section{Discussion and Conclusion}
\vspace{-2mm}
We introduced \textbf{SIFT}, a lightweight framework for self-improving coding agents via fast tree search. The central idea is that not every candidate self-modification needs to be evaluated with an expensive downstream benchmark before it can provide useful search signal. Instead, SIFT uses pairwise LLM-as-a-judge comparisons between candidate agents, aggregates the resulting preferences with a regularized Bradley--Terry model, and uses the induced ranks to guide parent selection and evaluation priority inside a disaggregated tree-search pipeline.

Our results suggest that cheap intermediate preference signals can substantially improve the efficiency of recursive self-improvement. On Polyglot, SIFT finds agents that outperform prior self-evolution frameworks while using substantially less search compute. In particular, the best SIFT variants achieve strong full-benchmark performance with both \texttt{o3-mini} and \texttt{Qwen3-Coder-30B} coding backbones, transferring across LLMs. 

These findings support a broader point: in self-improvement, the main bottleneck is not only the generation of candidate modifications, but the cost of deciding which modifications deserve further investment. Full benchmark evaluation remains a reliable signal, but it is too expensive to apply uniformly to every node in a growing search tree. SIFT addresses this by separating cheap ranking from expensive verification. The judge provides fast, noisy, but useful relative preferences; the Bradley--Terry model turns those sparse comparisons into a global ranking; and the disaggregated pipeline allows expansion to continue while full evaluations are still pending. This combination preserves the benefits of empirical benchmark feedback while greatly reducing the amount of evaluation compute required to explore promising regions of the agent-design space.

An important practical observation is that the form of the judge input matters. Our ablations show that full-file pairwise comparisons produce a substantially stronger rank signal than judging only chains of diffs. This suggests that judges are more effective when they can reason about the final behavior of an agent implementation directly, rather than reconstructing that behavior from incremental patches. More generally, the success of SIFT depends on the judge being good enough to distinguish promising modifications from clearly harmful ones, but not necessarily perfectly calibrated: because SIFT uses ranks rather than raw scores, it only requires the judge to provide a useful ordering signal.

\paragraph{Limitations.}
SIFT still depends on downstream benchmark evaluations to ultimately validate improvements. The judge model used in our strongest runs is also stronger than the coding backbone, which makes the setup highly practical but not a pure form of self-judged improvement. Future work should investigate whether smaller or open-weight judges can provide comparable ranking signal, and whether the coding model itself can be used as an effective judge under appropriate prompting or ensembling.

The Bradley--Terry aggregation also assumes that pairwise judge outcomes can be explained by a single latent strength per node. This is a useful approximation for search, but it may hide task-specific tradeoffs: a patch that improves long-horizon debugging may hurt simple syntax-repair tasks, while another may have the opposite profile. A single scalar ranking cannot represent these dimensions. Extending SIFT to maintain multiple skill-specific rankings, to expand the strength scalar into a multi-dimensional vector, or to condition judge comparisons on failure categories, may further improve search efficiency.

Finally, recursive self-improvement introduces safety and evaluation-integrity risks. Candidate patches may learn to exploit weaknesses in the harness, relax constraints, or optimize for artifacts of the evaluation process rather than genuine coding ability. In our experiments, we mitigate this by sandboxing execution, restricting writable files, and rejecting patches that modify benchmark or harness code. These safeguards are essential for any self-improvement system that can edit its own implementation.

\paragraph{Future work.}
The pairwise-judge signal in SIFT is currently used mainly for parent sampling and evaluation prioritization. A natural extension is to use BT strength as an explicit evaluation gate, so that only candidates whose judge rank clears a dynamic threshold receive full benchmark evaluation. Another direction is archive management: nodes that are consistently dominated in both judge rank and measured accuracy could be pruned, allowing larger searches under fixed memory and compute budgets.

More powerful judges could also be made agentic. Instead of comparing two patches only from code, a judge could run lightweight targeted probes, inspect traces, or synthesize small adversarial tasks before issuing a pairwise preference. This would preserve SIFT's relative-comparison interface while giving the judge access to richer behavioral evidence. Such agentic judging may close the gap between cheap static assessment and expensive full benchmark evaluation, but comes at a tradeoff of higher number of LLM API calls.

Our proposed alternative to full benchmark evaluation using LLM-as-a-judge model to improve the signal-to-cost ratio makes use of ranking each candidate node with a scalar strength obtained by pairwise comparison. Another natural extension is to ask the question whether we can similarly rank downstream tasks in terms of difficulty, such that we sample the tasks with the highest signal-to-cost ratio. Existing works in Item Response Theory~\citep{lord1968statistical, maiapolo2024tinybenchmarks} study whether we can also model the difficulty of the task $i$ as a scalar parameter $\beta_i$ such that the probability of judge $j$ solving task $i$ is $p_{j,i} = \sigma(\theta_j - \beta_i)$. Given sufficient number of task evaluations, this can potentially achieve a good estimate on the difficulty of tasks, from which we can sample tasks maximizing for Fisher information.

\paragraph{Conclusion.}
SIFT shows that recursive self-improvement can be made substantially more sample- and cost-efficient by inserting a cheap preference-based ranking layer between candidate generation and full benchmark evaluation. Pairwise LLM judgments, regularized Bradley--Terry aggregation, and disaggregated tree search together provide a simple mechanism for exploring agent modifications without evaluating every patch exhaustively. The resulting agents improve under strict resource budgets, suggesting that fast preference-guided search is a promising direction for scalable self-improving coding systems.

\paragraph{Safety and Ethics.}
Recursive self-improvement amplifies standard agentic risks. In Polyglot-50 runs we observed the diagnosis agent occasionally propose patches that relaxed the evaluation harness itself (e.g.\ extending timeouts or retry counts); we block these by restricting writable files and rejecting patches that touch harness code. No such patches made it into the final experiments, but the pattern reinforces that sandboxing and an explicit allow-list of editable files are minimum hygiene for this class of experiments.

\paragraph{Acknowledgements}
The authors would like to thank Yoon Kim, Wenyi Wang and Rujikorn Charakorn, Qi Sun and other collaborators at MIT and Sakana AI for fruitful discussions and inputs throughout the project. Xinghong Fu was supported by the MIT UROP grant. Part of this work was completed during Xinghong's internship at Sakana AI.


\bibliography{neurips_bib}
\bibliographystyle{plainnat}

\clearpage
\appendix
\section{SWE-Bench experiments}
\label{appendix:swe_evals}
\suppressfloats[t]

\begin{wrapfigure}{r}{0.42\textwidth}
\vspace{-6mm}
\centering
\includegraphics[width=\linewidth]{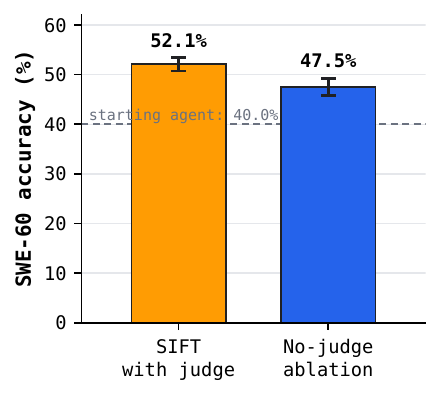}
\caption{Aggregate repeated-evaluation comparison. Bars show pooled mean accuracy and whiskers the standard error.}
\label{fig:swe60_pooled_sharpening}
\vspace{-6mm}
\end{wrapfigure}
We also evaluate SIFT on SWE-60, the 60-task subset of SWE-bench Verified~\citep{jimenez2024swebench} used in prior self-improvement research~\citep{hgm}. 
All SWE-60 experiments use \texttt{gpt-5-mini} as the coding model, \texttt{gpt-5} for diagnosis and self-improvement, and \texttt{gpt-5.4-high} as the judge model. 

We initialise our experiments from the same base coding agent as HGM, which scores 40.0\%, facilitating compariso against HGM, and limit the search budget to 800 evaluations.
We isolate the contribution of the judge by running an ablation of the judge.
In SIFT, pairwise LLM judgements are used to prioritise which self-improved agents receive further search and full evaluations.
In the no-judge ablation, this intermediate ranking signal is removed, so search decisions are based on SWE-60 accuracy and exploration alone.

\begin{table}
\centering
\small
\caption{SWE-60 judge/no-judge tree-search comparison. ``Search eval'' is the score observed during tree search. For each setting we rerun the two agents that the setting itself ranks best at the end of search: for SIFT, the judge's top two nodes by final Bradley--Terry strength; for the no-judge ablation, the two nodes with the highest measured search accuracy (node 1 ties two other nodes at 48.3\%). Raw scores list the search evaluation followed by three repeat evaluations; means include all four scores and parentheses give standard deviations in percentage points. The starting agent has a single complete full-60 evaluation, shared by all searches as their root. Node numbers are local to each run and do not indicate matched pairs.}
\label{tab:swe60_judge_nojudge}
\begin{tabularx}{\linewidth}{llccX}
\toprule
Setting & Agent & Search eval & Repeated mean & Raw full-60 scores \\
\midrule
Initial agent & Starting point & 40.0\% & -- & 40.0 \\
SIFT with judge & Judge rank 1 (node 16) & 53.3\% & 50.4\% (3.0) & 53.3, 46.7, 48.3, 53.3 \\
SIFT with judge & Judge rank 2 (node 12) & 55.0\% & 53.8\% (3.2) & 55.0, 48.3, 56.7, 55.0 \\
No-judge ablation & Accuracy rank 1 (node 11) & 51.7\% & 44.6\% (4.8) & 51.7, 38.3, 45.0, 43.3 \\
No-judge ablation & Accuracy rank 2 (node 1) & 48.3\% & 50.4\% (1.4) & 48.3, 50.0, 51.7, 51.7 \\
\bottomrule
\end{tabularx}
\end{table}

\begin{figure}
\centering
\includegraphics[width=\linewidth]{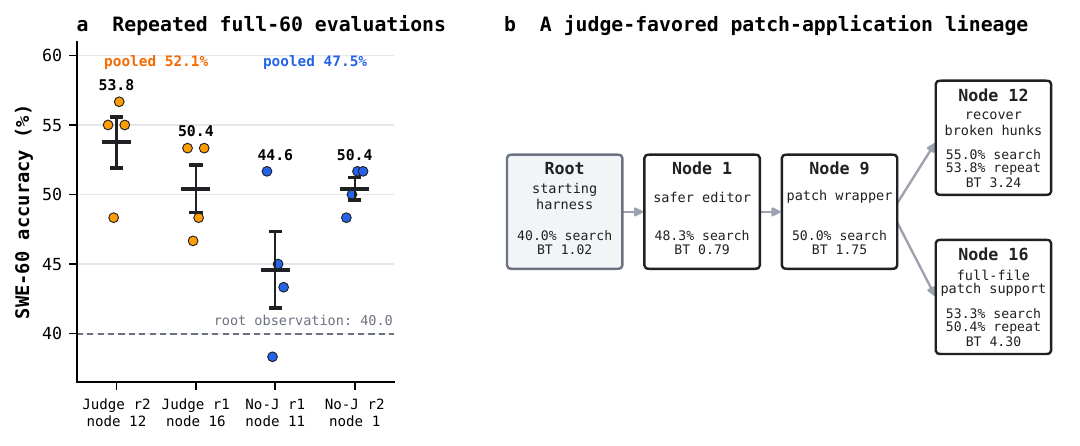}
\caption{SWE-60 repeatability and a qualitative judged lineage. \textbf{Left:} individual full-60 evaluations of the four selected agents; horizontal markers show means and error bars show standard errors, and the dashed line marks the single 40.0\% root observation. \textbf{Right:} one lineage favored during judge-guided search. Stored search-time BT strengths are shown alongside search accuracy; the two selected descendants additionally show their four-evaluation means.}
\label{fig:swe60_sharpening}
\end{figure}

For the judge and no-judge runs respectively Table~\ref{tab:swe60_judge_nojudge} show the two agents that ranked best at the end of the search.
Because individual SWE-60 evaluations are noisy, we report both the score observed during search and the full evaluations of the selected agents on the 60 tasks. 
We see that the judge-based selections lead the no-judge selections in both the search evaluation and under repetition.
We summarize this further in the aggregate presented in Figure~\ref{fig:swe60_pooled_sharpening}.
Together this suggests a sharpening effect from the judge.

\paragraph{SWE-60 judge preferences.}
In the right panel of Figure~\ref{fig:swe60_sharpening} we give a concrete example of the leading lineage chosen by the judge. 
In this lineage, support for LLM-style patch wrappers is first added to the agent's \texttt{git\_patch} tool. 
The two children then refine this.
Node 12 adds a guarded recovery path for malformed patches, while Node 16 adds support for accepting full-file and add-file wrapper blocks.
In the pairwise explanations, the judge prefers Node 16 over Node 12 due to concerns that Node 12 may allow for silently placing an edit incorrectly.
We generally observe that the judge has a bias toward conservative changes rather than towards additional machinery, which leads to the imperfect ordering.
This may be rectifiable through tuning the judge prompt.

\section{Technical appendices and supplementary material}


\subsection{Hyperparameters for Bradley–Terry model}
In practice, we use $\alpha = 1, \beta = 1, \eta  = 1$. 
We observe that with $\eta = 0$, the lack of sufficient exploration within the tree search process leads to a reduced performance, reaching 30.1\% on Polyglot (as opposed to 35.1\% with $\eta = 1$) using \texttt{o3-mini} as the base coding model. 
Further tuning of these hyperparameters may yield a better exploration-exploitation tradeoff and improve final agent performance.
The same hyperparameters are used across all reported tree search runs.

\subsection{Full-file judge prompt}
\label{app:full_file_judge_prompt}

SIFT uses a \texttt{full\_files} judge prompt mode in which the judge sees each candidate agent's post-patch runtime implementation rather than only a chain of diffs. In this mode, each judge call contains a system prompt and a user message. The user message is formatted with two slots: \slot{num\_candidates}, the number of candidates being compared, and \slot{candidates\_section}, the rendered view of each candidate's differing runtime files. We use the \texttt{default} full-file prompt variant.

\paragraph{System prompt.}
\begin{promptbox}{\texttt{FULL\_FILES\_SYSTEM\_PROMPT}}
\begin{Verbatim}[breaklines=true,breakanywhere=true,fontsize=\small]
You are an expert code reviewer evaluating multiple candidate modifications of an AI coding agent. For each candidate you will be shown its FULL implementation files (already modified). Compare them directly.

Your task is to RANK the candidates from most to least likely to improve the agent's performance.

## IMPORTANT: Rank the candidates on how likely they are to improve performance, but also check for correctness. You can assume that imports to files that you cannot see will work fine.

You MUST respond in the following JSON format and nothing else.
Write the `reasoning` field FIRST so your critique drives the ranking, not the
other way around — once the ranking tokens are emitted you cannot revise them:
{"reasoning": "<brief explanation of your ranking>", "ranking": [<list of candidate ids from best to worst>]}
\end{Verbatim}
\end{promptbox}

\paragraph{User template.}
\begin{promptbox}{\texttt{FULL\_FILES\_USER\_TEMPLATE}}
\begin{Verbatim}[breaklines=true,breakanywhere=true,fontsize=\small]
Below are {num_candidates} candidate modifications of the AI coding agent. Each candidate's complete implementation files are shown. Rank them from most to least likely to improve agent performance.

{candidates_section}

Respond with JSON only, `reasoning` field first then `ranking`: {"reasoning": "<str>", "ranking": [<candidate ids best to worst>]}
\end{Verbatim}
\end{promptbox}

\paragraph{Candidate rendering.}
The \slot{candidates\_section} field is produced by collecting the patched \texttt{coding\_agent.py} file together with all Python files under \texttt{tools/} and \texttt{utils/}, excluding cache files. Files that are byte-identical across all compared candidates are omitted, so the judge sees only the runtime files that differ. Files missing from a candidate are marked explicitly.

\begin{promptbox}{Rendered \texttt{candidates\_section} schematic}
\begin{Verbatim}[breaklines=true,breakanywhere=true,fontsize=\small]
Files that are byte-identical across all candidates have been omitted; only files that differ between candidates are shown below.

## Candidate <id_1>

### <relative/path/to/file_a.py>
```python
<full file contents, truncated at 30000 chars>
```

### <relative/path/to/file_b.py>
```python
<full file contents>
```

## Candidate <id_2>

### <relative/path/to/file_a.py>
```python
<full file contents>
```

### <relative/path/to/file_b.py>
(file absent in this candidate)
\end{Verbatim}
\end{promptbox}

\subsection{Variable tracking in SIFT}
\begin{table}[ht]
\centering
\caption{State variables maintained by SIFT during tree search.}
\label{tab:sift_state}
\begin{tabular}{lc}
\toprule
\textbf{Variable} & \textbf{Definition} \\
\midrule
$W_{ij}$ & Number of judge wins of node $i$ over node $j$ \\
$b_i$ & BT strength of node $i$ from pairwise judge outcomes \\
$a_i$ & Measured accuracy of node $i$ on the evaluation subset\\
$r_b(i)$ & Rank of node $i$ by BT strength, with best rank $0$\\
$r_a(i)$ & Rank of node $i$ by accuracy, with best rank $0$  \\
$v_i$ & Number of times node $i$ has been selected as a parent \\
\bottomrule
\end{tabular}
\end{table}

\subsection{Theoretical properties of BT-guided judging}

The role of the BT judge in SIFT is not to certify that a candidate patch is
truly better on the downstream benchmark. Instead, the judge provides a cheap
intermediate signal for deciding which parts of the self-improvement tree should
be explored before committing to expensive benchmark evaluation. Benchmark
accuracy remains an important validation signal; the judge is used only as a noisy
ranking oracle. The Bradley--Terry aggregation step gives a principled way to
convert sparse, potentially inconsistent pairwise preferences into a global
ordering over archive nodes.

We formalize three properties of this mechanism. First, the regularized BT
objective used by SIFT is well posed even when the comparison graph is sparse.
Second, in an idealized Bradley--Terry (BT) comparison model, the fitted
BT scores recover the top-$K$ latent judge frontier from sparse pairwise
comparisons. Third, if the recovered ranking is informative, the exponential
rank-based parent sampler substantially increases the probability of expanding
frontier nodes relative to uniform sampling.

\paragraph{Bradley--Terry fit details.}
The Bradley--Terry model can be viewed as logistic regression over pairwise differences in latent skill. Let $s_i = \log \theta_i$ denote the log-strength of node $i$. Then
\[
    P(i \succ j)
    =
    \frac{\theta_i}{\theta_i + \theta_j}
    =
    \frac{\exp(s_i)}{\exp(s_i)+\exp(s_j)}
    =
    \sigma(s_i - s_j),
\]
where $\sigma$ is the logistic sigmoid. Therefore, each LLM-judge preference contributes evidence about the difference $s_i-s_j$, not about the absolute value of either score. This is why BT is a natural fit for our judge interface: the judge only has to decide which of two candidate patches is better, while the aggregation step recovers a global ordering over all nodes.

Given the win-count matrix $W$, the unregularized log-likelihood is
\[
    \ell(\theta)
    =
    \sum_{i \neq j}
    W_{ij}
    \log
    \frac{\theta_i}{\theta_i+\theta_j}.
\]
We use a smoothed version by adding a symmetric pseudo-count $\lambda$ to every ordered pair:
\[
    \ell_\lambda(\theta)
    =
    \sum_{i \neq j}
    (W_{ij}+\lambda)
    \log
    \frac{\theta_i}{\theta_i+\theta_j}.
\]
Equivalently, this adds $\lambda$ fictitious wins for $i$ over $j$ and $\lambda$ fictitious wins for $j$ over $i$ for every unordered pair. This prevents degenerate estimates when the comparison graph is sparse or when a node has not yet lost. Such sparsity is typical in SIFT because each newly generated node is compared against at most $K$ incumbent nodes rather than against the entire archive.

We solve the regularized BT likelihood using the standard minorization--maximization (MM) fixed-point update:
\[
\gamma_i^{(t+1)}
=
\frac{
    \sum_j (W_{ij}+\lambda)
}{
    \sum_j
    \frac{
        W_{ij}+W_{ji}+2\lambda
    }{
        \gamma_i^{(t)}+\gamma_j^{(t)}
    }
}.
\]
After every update, we normalize the scores so that $\sum_i \gamma_i = n$. This normalization is necessary because BT strengths are only identifiable up to a global multiplicative constant: multiplying all strengths by the same positive scalar leaves every pairwise probability unchanged. We iterate until the maximum absolute change in normalized strengths is below $10^{-6}$ or until 100 iterations have elapsed.

In our implementation, self-comparisons are skipped. The root node is assigned $\gamma_0=1$ for initialization, but the fitted ranking used by SIFT is determined by the accumulated pairwise judge outcomes among archive nodes. Once the strengths are fitted, we sort nodes by $\gamma_i$ to obtain the BT rank $r_{\text{judge}}(i)$. SIFT uses this rank, rather than the raw $\gamma_i$, in parent sampling. This rank-based use is intentional: it preserves the ordering information supplied by the judge while avoiding over-sensitivity to the numerical magnitude of BT scores, which can vary with the number and topology of comparisons.

\paragraph{Regularized BT objective.}
The win count matrix $W_{ij}$ denotes the number of times node $i$ is preferred to node $j$ by the
judge, with $W_{ii}=0$. 
We can express the parameters $\theta_i$ in terms of $s_i=\log \theta_i$, the log-strength of node $i$.
Thus SIFT fits a smoothed Bradley--Terry model by maximizing
\[
    \ell_\lambda(s)
    =
    \sum_{i\neq j}
    (W_{ij}+\lambda)
    \left[
        s_i-\log\!\left(e^{s_i}+e^{s_j}\right)
    \right],
\]
Since BT scores are identifiable
only up to an additive shift in $s$, we optimize over the normalized subspace
\[
    \mathcal{H}
    =
    \left\{
        s\in\mathbb{R}^n:
        \sum_{i=1}^n s_i=0
    \right\}.
\]

\begin{theorem}[Well-posedness of regularized BT]
\label{thm:bt_well_posed}
For any win-count matrix $W$ with nonnegative entries and any $\lambda>0$,
the regularized BT log-likelihood $\ell_\lambda$ has a unique maximizer on
$\mathcal{H}$.
\end{theorem}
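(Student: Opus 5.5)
The plan is to show that $\ell_\lambda$ is continuous, strictly concave on $\mathcal{H}$, and coercive there, i.e.\ $\ell_\lambda(s)\to-\infty$ as $\|s\|\to\infty$ within $\mathcal{H}$. Existence then follows from compactness of superlevel sets, and uniqueness follows from strict concavity. Throughout I assume $n\ge 2$; for $n=1$, $\mathcal{H}=\{0\}$ and there is nothing to prove.

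\emph{Step 1: pair decomposition.} I first rewrite the objective as a sum over unordered pairs $\{i,j\}$. Each pair contributes
\[
(W_{ij}+\lambda)\log\sigma(s_i-s_j)+(W_{ji}+\lambda)\log\sigma(s_j-s_i).
\]
The map $x\mapsto\log\sigma(x)=-\log(1+e^{-x})$ is concave, since its second derivative is $-\sigma(x)\sigma(-x)<0$. So each term is a concave function of the difference $d_{ij}=s_i-s_j$. Its second derivative in $d_{ij}$ is
\[
-(W_{ij}+W_{ji}+2\lambda)\,\sigma(d_{ij})\sigma(-d_{ij}),
\]
which is strictly negative because $\lambda>0$.

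\emph{Step 2: strict concavity on $\mathcal{H}$.} The Hessian of $\ell_\lambda$ is $-L(s)$, where $L(s)$ is the weighted graph Laplacian of the complete graph on $n$ nodes. Its edge weights are $c_{ij}(s)=(W_{ij}+W_{ji}+2\lambda)\sigma(d_{ij})\sigma(-d_{ij})$, all strictly positive. Then
\[
v^\top L v=\sum_{i<j}c_{ij}(v_i-v_j)^2,
\]
which vanishes only when all $v_i$ are equal. The only vector in $\mathcal{H}$ with all coordinates equal is $v=0$. Hence $-L(s)$ is negative definite on $\mathcal{H}$ at every $s$, and $\ell_\lambda$ is strictly concave on $\mathcal{H}$. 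This is the point where $\lambda>0$ matters: without it, $W$ could give a disconnected comparison graph and the Laplacian would have extra null directions.

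\emph{Step 3: coercivity (main obstacle).} Every term of $\ell_\lambda$ is $\le 0$, so it suffices to find a single term that is very negative. Take $s\in\mathcal{H}$ and let $M=\max_i s_i-\min_i s_i$. Because $\sum_i s_i=0$, we have $\max_i s_i\ge 0\ge\min_i s_i$, so $\|s\|_\infty\le M$. Let $i^*$ attain the maximum and $j^*$ the minimum. The term with pseudo-weight $\lambda$ in which $j^*$ beats $i^*$ contributes
\[
\lambda\log\sigma(s_{j^*}-s_{i^*})=\lambda\log\sigma(-M)\le-\lambda\log(1+e^{M})\le-\lambda M.
\]
Therefore $\ell_\lambda(s)\le-\lambda M\le-\lambda\|s\|_\infty\to-\infty$. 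The delicate part is that this bound must hold uniformly regardless of the sparsity pattern of $W$, and the fictitious loss of $i^*$ to $j^*$ is exactly what supplies that uniformity.

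\emph{Step 4: conclusion.} By Step 3, the superlevel set $\{s\in\mathcal{H}:\ell_\lambda(s)\ge\ell_\lambda(0)\}$ is closed and bounded. The function $\ell_\lambda$ is continuous, so it attains a maximum on this compact set, and that point maximizes $\ell_\lambda$ over all of $\mathcal{H}$. Uniqueness follows from Step 2: if there were two distinct maximizers, strict concavity would make $\ell_\lambda$ strictly larger at their midpoint, a contradiction.

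Finally, I would note that every pair probability depends only on differences $s_i-s_j$, so the objective is invariant under adding a constant to all $s_i$. Working in $\mathcal{H}$ therefore loses no generality, and the unique maximizer on $\mathcal{H}$ corresponds to the fixed point of the MM iteration described above, up to the paper's normalization $\sum_i\gamma_i=n$.
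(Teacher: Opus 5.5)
Your proof is correct and follows the paper's argument: strict concavity on $\mathcal{H}$ from the Laplacian-form Hessian with strictly positive weights $(W_{ij}+W_{ji}+2\lambda)\sigma(s_i-s_j)\sigma(s_j-s_i)$, then existence of a maximizer via coercivity and a compact superlevel set. Your Step 3 bound $\ell_\lambda(s)\le -\lambda\bigl(\max_i s_i-\min_i s_i\bigr)\le -\lambda\|s\|_\infty$ makes explicit the coercivity claim that the paper only asserts.
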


\begin{proof}
We first show strict concavity on $\mathcal{H}$. For each unordered pair
$\{i,j\}$, define
\[
    m_{ij}=W_{ij}+W_{ji}+2\lambda .
\]
Because $\lambda>0$, we have $m_{ij}>0$ for every $i\neq j$. The contribution of
the pair $\{i,j\}$ to the Hessian of $\ell_\lambda$ is
\[
    -m_{ij}\,
    \sigma(s_i-s_j)\sigma(s_j-s_i)
    (e_i-e_j)(e_i-e_j)^\top,
\]
where $\sigma(x)=1/(1+e^{-x})$ and $e_i$ is the standard basis vector with a 1 in the $i$-th coordinate and 0 everywhere else. Hence, for any vector $x\in\mathbb{R}^n$,
\[
    x^\top \nabla^2 \ell_\lambda(s) x
    =
    -
    \sum_{i<j}
    m_{ij}\,
    \sigma(s_i-s_j)\sigma(s_j-s_i)
    (x_i-x_j)^2 .
\]
The coefficients are strictly positive for all $i<j$. Therefore the quadratic
form is zero only when $x_i=x_j$ for all $i,j$, i.e. when $x$ is proportional to
the all-ones vector. On the normalized subspace $\mathcal{H}$, the only such
direction is $x=0$. Thus $\ell_\lambda$ is strictly concave on $\mathcal{H}$.

It remains to show that a maximizer exists. Because the pseudo-count adds
positive mass in both directions for every pair, the objective tends to
$-\infty$ whenever the range $\max_i s_i-\min_i s_i$ tends to infinity. On
$\mathcal{H}$, $\|s\|_2\to\infty$ implies that this range tends to infinity.
Therefore the upper level sets of $\ell_\lambda$ are bounded in $\mathcal{H}$.
Since $\ell_\lambda$ is continuous, it attains its maximum on a compact upper
level set. Strict concavity then implies that this maximizer is unique.
\end{proof}

This theorem justifies the pseudo-count used by SIFT. Each new node is compared
against only a small number of incumbents, so without regularization a node with
a few wins and no losses could receive an unstable or unbounded score. The
pseudo-count prevents this degeneracy and makes BT scores comparable across
nodes with different comparison counts.

\paragraph{Idealized top-$K$ recovery.}
We next state an idealized statistical guarantee for BT aggregation. This result
does not exactly model the adaptive comparison graph used by SIFT. Instead, it
isolates the statistical benefit of BT aggregation in the standard random-design
setting studied in the pairwise-ranking literature.

\begin{assumption}[Random-design BT judge model]
\label{assump:bt}
There are $n$ fixed candidate agents, each with a latent judge utility
$s_i^\star\in\mathbb{R}$. For identifiability, assume
$\sum_i s_i^\star=0$ and $\|s^\star\|_\infty\le B$ for a constant $B$.
For each unordered pair $(i,j)$, the comparison graph includes edge $(i,j)$
independently with probability $p$. Each observed edge is compared $L$
independent times. Every comparison is generated according to the
Bradley--Terry--Luce model
\[
    \Pr(i\succ j)
    =
    \sigma(s_i^\star-s_j^\star).
\]
\end{assumption}

Let $s_{(1)}^\star\ge s_{(2)}^\star\ge\cdots\ge s_{(n)}^\star$ denote the sorted
latent judge utilities, and let $S_K^\star$ be the set of indices corresponding
to the $K$ largest entries. Define the top-$K$ separation gap
\[
    \Delta_K
    =
    s_{(K)}^\star-s_{(K+1)}^\star .
\]

\begin{theorem}[Idealized top-$K$ recovery]
\label{thm:topk_recovery}
Under Assumption~\ref{assump:bt}, suppose the comparison graph is sufficiently
connected and the dynamic range $B$ is bounded. Let $\hat s$ be the regularized
BT maximum likelihood estimator, normalized so that $\sum_i \hat s_i=0$.
Existing top-$K$ ranking results for the BT model imply that, with high
probability,
\[
    \|\hat s-s^\star\|_\infty
    \le
    C_B
    \sqrt{\frac{\log n}{npL}},
\]
where $C_B$ depends only on the dynamic-range bound $B$. Consequently, if
\[
    \Delta_K
    >
    2C_B
    \sqrt{\frac{\log n}{npL}},
\]
then sorting nodes by $\hat s_i$ exactly recovers the true top-$K$ set
$S_K^\star$.
\end{theorem}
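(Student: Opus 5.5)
The theorem has two parts. The first is an $\ell_\infty$ estimation bound, which we will import from the literature on top-$K$ ranking under the BTL model (Chen--Suh, Negahban--Oh--Shah, Chen--Fan--Ma--Wang). The second is a deterministic consequence of that bound, which we will prove directly. The statement explicitly attributes the estimation bound to existing results, so the real work is (i) checking that our regularized estimator fits those results, and (ii) the separation argument.

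\textbf{Step 1: the $\ell_\infty$ bound.} We will invoke the entrywise guarantee of Chen, Fan, Ma and Wang (2019) for the regularized MLE. Under Assumption~\ref{assump:bt} with $p \ge c_0 \log n / n$, the Erd\H{o}s--R\'enyi comparison graph is connected with high probability and its Laplacian has spectral gap of order $np$. With $\|s^\star\|_\infty\le B$, their leave-one-out analysis gives
\[
\|\hat s-s^\star\|_\infty \le C_B\sqrt{\log n/(npL)} .
\]
Our penalty is a pseudo-count $\lambda$ on every ordered pair rather than their ridge penalty $\tfrac{\lambda}{2}\|s\|_2^2$.

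\textbf{Step 2: handling the pseudo-count.} This is the main obstacle. The pseudo-count acts as a data-dependent shrinkage toward equal strengths, and it also adds fictitious edges between all $\binom{n}{2}$ pairs, including unobserved ones. I would proceed in three moves.
\begin{itemize}
\item Use Theorem~\ref{thm:bt_well_posed} to guarantee that $\hat s$ exists and is unique on $\mathcal{H}$.
\item Compare the first-order conditions of $\ell_\lambda$ with those of the unregularized likelihood on the observed graph. The induced bias in the score equations is of order $\lambda n$ per coordinate, against curvature of order $npL$.
\item Require $\lambda \lesssim \sqrt{pL\log n/n}$. Then the bias contributes at most an additional $O(\sqrt{\log n/(npL)})$ in $\ell_\infty$, which is absorbed into $C_B$.
\end{itemize}
The hypothesis that $B$ is bounded keeps $\sigma(s_i-s_j)\sigma(s_j-s_i)\ge c_B>0$, which gives uniform curvature. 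I expect this perturbation step, and making the implicit condition on $\lambda$ explicit, to be the delicate point. The remaining argument only needs the bound as stated.

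\textbf{Step 3: exact top-$K$ recovery.} Write $\varepsilon = C_B\sqrt{\log n/(npL)}$ and suppose $\Delta_K>2\varepsilon$. Take any $i\in S_K^\star$ and $j\notin S_K^\star$. Then $s_i^\star\ge s_{(K)}^\star$ and $s_j^\star\le s_{(K+1)}^\star$, so
\[
\hat s_i-\hat s_j \ge (s_i^\star-\varepsilon)-(s_j^\star+\varepsilon) \ge \Delta_K-2\varepsilon>0 .
\]
Thus every true top-$K$ node strictly outranks every other node under $\hat s$, and the $K$ largest entries of $\hat s$ are exactly $S_K^\star$.

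Since sorting by $\hat s_i=\log\gamma_i$ is the same as sorting by $\gamma_i$, the BT rank used by SIFT recovers the frontier. This holds on the high-probability event from Step 1.
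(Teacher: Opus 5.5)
Your proposal is correct and follows the paper's proof. The paper likewise takes the $\ell_\infty$ bound directly from the top-$K$ ranking results of Chen et al.\ and then gives exactly your separation argument, $\hat s_i-\hat s_j\ge\Delta_K-2\varepsilon>0$ for every $i\in S_K^\star$ and $j\notin S_K^\star$.

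Your Step~2 goes beyond the paper, which simply asserts that the cited results (proved for spectral methods or a ridge-penalized MLE) apply to the pseudo-count estimator. Your point that this tacitly needs something like $\lambda=O(\sqrt{pL\log n/n})$ is sound: for large $\lambda$ the fictitious comparisons shrink $\hat s$ toward $0$ on $\mathcal{H}$, and the stated $\ell_\infty$ bound then fails, so this is a hypothesis the theorem leaves implicit.
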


\begin{proof}
The entrywise error bound follows from standard top-$K$ ranking results for the
BT model under random non-adaptive sampling; see, e.g.,
\citet{chen2015spectralmle} and \citet{chen2019spectralregularizedmle}. We use
this bound to prove exact top-$K$ recovery.

Let
\[
    \varepsilon
    =
    C_B
    \sqrt{\frac{\log n}{npL}},
\]
and suppose that $\|\hat s-s^\star\|_\infty\le\varepsilon$. For any
$i\in S_K^\star$ and any $j\notin S_K^\star$, we have
\[
    s_i^\star-s_j^\star
    \ge
    \Delta_K .
\]
Therefore,
\[
\begin{aligned}
    \hat s_i-\hat s_j
    &=
    (s_i^\star-s_j^\star)
    +
    (\hat s_i-s_i^\star)
    -
    (\hat s_j-s_j^\star) \\
    &\ge
    \Delta_K - 2\varepsilon .
\end{aligned}
\]
If $\Delta_K>2\varepsilon$, then $\hat s_i>\hat s_j$ for every
$i\in S_K^\star$ and every $j\notin S_K^\star$. Hence all true top-$K$ items are
ranked above all non-top-$K$ items by $\hat s$, so sorting by the BT estimate
exactly recovers $S_K^\star$.
\end{proof}

Theorem~\ref{thm:topk_recovery} should be read as an idealized guarantee rather
than a direct finite-sample theorem for SIFT. SIFT uses an adaptive,
frontier-biased comparison graph: each new candidate is compared against a small
set of strong incumbents rather than against an Erd\H{o}s--R\'enyi sample of all
possible pairs. Nevertheless, the theorem explains why BT aggregation is a
natural design choice. If the LLM judge behaves approximately as a noisy
preference oracle whose latent utilities are positively correlated with
downstream agent quality, then sparse pairwise comparisons can recover a useful
frontier ranking without exhaustively comparing all $\binom{n}{2}$ node pairs.

\paragraph{Rank amplification by parent sampling.}
SIFT uses BT scores only through their induced ranks. The following simple
calculation shows why even a moderately informative ranking can substantially
bias search toward frontier nodes.

\begin{proposition}[Probability mass on the top ranks]
\label{prop:rank_amplification}
Suppose the archive contains $n$ nodes ranked $r=0,1,\ldots,n-1$, where
$r=0$ is best. If parent sampling uses
\[
    P(r)
    =
    \frac{\exp(-\alpha r)}
    {\sum_{\ell=0}^{n-1}\exp(-\alpha \ell)}
\]
with $\alpha>0$, then the total probability assigned to the top $M$ nodes is
\[
    P(\mathrm{top}\ M)
    =
    \frac{1-e^{-\alpha M}}{1-e^{-\alpha n}}.
\]
Uniform sampling assigns probability $M/n$ to the same set.
\end{proposition}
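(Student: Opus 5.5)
This is a direct geometric-series computation. Set $q = e^{-\alpha}$, which lies in $(0,1)$ because $\alpha>0$. The normalizing constant is a finite geometric sum:
\[
    Z_n=\sum_{\ell=0}^{n-1} q^\ell = \frac{1-q^n}{1-q}.
\]
The top $M$ nodes are exactly those with ranks $r=0,\ldots,M-1$, taking $1\le M\le n$. Their unnormalized mass is therefore
\[
    Z_M=\sum_{r=0}^{M-1} q^r = \frac{1-q^M}{1-q}.
\]
Dividing the two sums, the common factor $1-q$ cancels and gives
\[
    P(\mathrm{top}\ M)=\frac{1-q^M}{1-q^n}=\frac{1-e^{-\alpha M}}{1-e^{-\alpha n}},
\]
which is the stated formula. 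The uniform-sampling claim is immediate, since each of the $n$ nodes receives mass $1/n$ and so $M$ nodes receive $M/n$.

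The only points needing care are bookkeeping. The geometric-sum formula requires $q\neq 1$, which is exactly why the hypothesis $\alpha>0$ is needed. Ranks are taken as distinct integers $0,\ldots,n-1$ (ties broken arbitrarily), as the statement assumes. As a sanity check, $M=n$ gives probability $1$.

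To connect this to the surrounding discussion of ``amplification,'' I would add a short remark that this probability is at least $M/n$. The map $x\mapsto 1-e^{-\alpha x}$ is concave and vanishes at $0$, so $(1-e^{-\alpha x})/x$ is nonincreasing. Evaluating this at $x=M\le n$ gives $(1-e^{-\alpha M})/M\ge(1-e^{-\alpha n})/n$, which rearranges to $P(\mathrm{top}\ M)\ge M/n$. For fixed $\alpha$ and $n\to\infty$, the formula tends to $1-e^{-\alpha M}$ rather than to $0$. There is no real obstacle here; this optional comparison is the only step beyond algebra.
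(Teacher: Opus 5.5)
Your proof is correct and follows the same route as the paper: sum the two finite geometric series and cancel the common factor $1-e^{-\alpha}$. Your extra remark that $P(\mathrm{top}\ M)\ge M/n$, obtained from the concavity of $x\mapsto 1-e^{-\alpha x}$, is also valid. It makes rigorous the amplification claim that the paper states only informally after its proof.
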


\begin{proof}
The result follows by summing a finite geometric series:
\[
    P(\mathrm{top}\ M)
    =
    \frac{\sum_{r=0}^{M-1} e^{-\alpha r}}
    {\sum_{r=0}^{n-1} e^{-\alpha r}}
    =
    \frac{(1-e^{-\alpha M})/(1-e^{-\alpha})}
    {(1-e^{-\alpha n})/(1-e^{-\alpha})}
    =
    \frac{1-e^{-\alpha M}}{1-e^{-\alpha n}}.
\]
\end{proof}

Thus, once BT aggregation produces an informative ranking, exponential
rank-based sampling converts that ranking into search pressure toward promising
nodes. Compared with uniform sampling, the multiplicative increase in probability
mass on the top $M$ nodes is
\[
    \frac{P(\mathrm{top}\ M)}{M/n}
    =
    \frac{n}{M}
    \cdot
    \frac{1-e^{-\alpha M}}{1-e^{-\alpha n}}.
\]
For small $M$ and moderate $\alpha$, this factor can be large.

\paragraph{Visit-count penalty.}
The previous proposition ignores the visit-count term used by SIFT. For a fixed
archive, define
\[
    q_i
    =
    \exp\!\left(
        -\alpha r_{b}(i)
        -\beta r_{a}(i)
    \right)
    >0 .
\]
SIFT samples node $i$ at time $t$ with probability
\[
    P_t(i)
    =
    \frac{
        q_i(1+v_i(t))^{-\eta}
    }{
        \sum_j q_j(1+v_j(t))^{-\eta}
    },
\]
where $v_i(t)$ is the number of previous times node $i$ has been selected as a
parent.

\begin{proposition}[No permanent starvation in a fixed archive]
\label{prop:no_starvation}
For any fixed finite archive and any $\eta\ge 0$, every node is sampled
infinitely often almost surely.
\end{proposition}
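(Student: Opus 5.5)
The plan is to reduce the claim to a conditional Borel--Cantelli argument (Lévy's extension). Fix the finite archive of $n$ nodes with weights $q_i>0$. The ranks, and hence the $q_i$, are constant, so only the visit counts $v_i(t)$ change over time. Let $\mathcal{F}_t$ be the history up to time $t$ and $A_t^{(i)}$ the event that node $i$ is selected at step $t$. By Lévy's conditional Borel--Cantelli lemma, $A_t^{(i)}$ occurs infinitely often almost surely on the event $\{\sum_t P_t(i)=\infty\}$, where $P_t(i)=\Pr(A_t^{(i)}\mid\mathcal{F}_{t-1})$. It therefore suffices to show that $\sum_t P_t(i)=\infty$ almost surely for every $i$.

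First, the case $\eta=0$ is immediate: $P_t(i)=q_i/\sum_j q_j$ is a positive constant, so the sum diverges. For $\eta>0$, the key step is to lower bound $P_t(i)$ on the event $E_i=\{v_i(t)\le V \text{ for all } t\}$ that node $i$ is selected only finitely often. On $E_i$ the numerator satisfies $q_i(1+v_i(t))^{-\eta}\ge q_i(1+V)^{-\eta}$. For the denominator, every term obeys $q_j(1+v_j(t))^{-\eta}\le q_j$, so the denominator is at most $Q=\sum_j q_j$. Hence $P_t(i)\ge q_i(1+V)^{-\eta}/Q>0$ for all $t$, and $\sum_t P_t(i)=\infty$ on $E_i$. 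By Lévy's lemma, node $i$ is then selected infinitely often almost surely on $E_i$, which contradicts the definition of $E_i$. So $\Pr(E_i\cap\{v_i(t)\le V\ \forall t\})=0$ for each integer $V$. A countable union over $V$ gives $\Pr(v_i(t)\text{ bounded})=0$, and a finite union over $i$ finishes the argument.

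The only delicate point is ensuring the lower bound is uniform in $t$ so the conditional sum diverges. This works because the denominator is trivially bounded by $Q$ since $(1+v)^{-\eta}\le 1$; no control of the other nodes' visit counts is needed. A more elementary alternative, should one wish to avoid Lévy's lemma, also works. On $\{v_i(t)\le V \text{ for all } t\ge T\}$, the probability of never selecting $i$ over steps $T,\dots,T+m$ is at most $(1-c)^m\to 0$, with $c=q_i(1+V)^{-\eta}/Q$. Taking countable unions over $T$ and $V$ then yields the same conclusion.
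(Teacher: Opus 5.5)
Your proof is correct and rests on the same key observation as the paper's: while node $i$'s visit count stays bounded by $V$, its conditional selection probability is at least $q_i(1+V)^{-\eta}/\sum_j q_j$, because every factor satisfies $(1+v_j)^{-\eta}\le 1$. Your use of L\'evy's conditional Borel--Cantelli lemma (or the countable union over $T$ and $V$) makes rigorous the step that the paper handles informally, namely bounding the probability of ``never sampling $i$ again'' by $(1-\rho)^T$ on an event that depends on the future.
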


\begin{proof}
Assume for contradiction that some node $i$ is sampled only finitely many times.
Then $v_i(t)$ is eventually constant, so
$q_i(1+v_i(t))^{-\eta}$ is eventually bounded below by a positive constant.
For every node $j$, we have $(1+v_j(t))^{-\eta}\le 1$, so the denominator is at
most $\sum_j q_j$. Hence, after some finite time, the conditional probability of
sampling node $i$ is bounded below by a positive constant $\rho>0$. The
probability of never sampling $i$ again is then at most
\[
    \lim_{T\to\infty}(1-\rho)^T=0,
\]
contradicting the assumption. Therefore every node is sampled infinitely often
almost surely.
\end{proof}

In SIFT the archive is not fixed, so Proposition~\ref{prop:no_starvation} is not
a full exploration guarantee for the growing tree. It nevertheless captures the
purpose of the visit-count penalty: high-ranked nodes are favored, but repeated
selection gradually reduces their sampling weight, preventing the search from
collapsing permanently onto a single lineage.

\paragraph{Full algorithm}
In Algorithm ~\ref{alg:sift} we list the full algorithm used to execute the tree search in SIFT.
\begin{algorithm}[hbt!]
\caption{SIFT: Self-Improvement via Fast Tree Search}
\label{alg:sift}
\begin{algorithmic}[1]
\STATE \textbf{Input:} base agent $A_0$, judge $J$, evaluation subset $\mathcal{B}$, max nodes $N$, targets per node $K$, weights $(\alpha,\beta,\eta)$, BT prior $\lambda$
\STATE Evaluate $A_0$ on $\mathcal{B}$; initialize archive $\mathcal{T}\leftarrow\{A_0\}$, win matrix $W\leftarrow 0$, visit counts $v_i\leftarrow 0$
\STATE Set initial accuracy $a_0$ from evaluation and initial BT strength $\theta_0\leftarrow 1$
\WHILE{$|\mathcal{T}| < N$}
    \STATE \textbf{Collect completed expansions.}
    \FOR{each finished expansion producing child node $c$ with parent $p$}
        \STATE Run easy evaluation on $c$
        \IF{$c$ fails the easy evaluation}
            \STATE Mark $c$ as failed and discard it from full evaluation
            \STATE \textbf{continue}
        \ENDIF
        \STATE Add $c$ to archive: $\mathcal{T}\leftarrow \mathcal{T}\cup\{c\}$
        \STATE Temporarily set $a_c\leftarrow a_p$ until full evaluation completes
        \STATE Select up to $K$ comparison targets from $\mathcal{T}\setminus\{c\}$ using $\alpha r_{b}+\beta r_{a}$
        \FOR{each comparison target $t$}
            \STATE Query judge $J(c,t)$ and update $W_{c,t}$ or $W_{t,c}$
        \ENDFOR
        \STATE Refit regularized BT:
        \[
        \{\theta_i\}_{i\in\mathcal{T}}
        \leftarrow
        \arg\max_{\theta_i>0}
        \sum_{i\neq j}
        (W_{ij}+\lambda)
        \log
        \frac{\theta_i}{\theta_i+\theta_j}
        \]
        \STATE Enqueue $c$ for full evaluation with priority
        \[
        \alpha r_{b}(c)+\beta r_{a}(c)
        \]
    \ENDFOR

    \STATE \textbf{Collect completed full evaluations.}
    \FOR{each finished full evaluation of node $c$}
        \STATE Update measured accuracy $a_c$
    \ENDFOR

    \STATE \textbf{Spawn new expansions.}
    \WHILE{an expansion slot is available and $|\mathcal{T}| < N$}
        \STATE Sample parent $p\in\mathcal{T}$ with
        \[
        P(p)
        \propto
        \exp\!\left(
        -\alpha r_{b}(p)
        -\beta r_{a}(p)
        -\eta\log(1+v_p)
        \right)
        \]
        \STATE Launch asynchronous \textsc{Diagnose+SelfImprove} from parent $p$
        \STATE Increment $v_p\leftarrow v_p+1$
    \ENDWHILE

    \STATE Launch full evaluations from the priority queue while evaluator slots are available
    \STATE Checkpoint current archive, win matrix, evaluations, and pending jobs
\ENDWHILE
\STATE \textbf{Return} the node in $\mathcal{T}$ with highest measured accuracy.
\end{algorithmic}
\end{algorithm}

\subsection{Discovered agent examples}
\label{app:interesting_agent}

To better understand what kinds of agent modifications SIFT discovers, we inspect one example high-performing lineage. Starting from the initial coding agent of \texttt{o3-mini}, SIFT first discovers node 9, which reaches $0.44$ with a very small intervention: a one-line prompt edit that instructs the agent to run tests after editing, plus a new \texttt{test\_runner} tool. A later child, node 24, inherits node 9's changes and adds a substantially larger \texttt{test\_analyzer} mechanism, but its realized accuracy remains $0.44$ and its BT score is lower than node 9's. This example illustrates that SIFT can identify simple, active-path changes whose benefit is not improved by adding extra machinery.

\begin{center}
\small
\begin{tabular}{rrrrl}
\toprule
Node & Parent & Accuracy & BT score & Cumulative change vs.\ baseline \\
\midrule
9  & root (0) & 0.44 & 2.370 & +1 instruction line, +1 new tool \\
24 & 9        & 0.44 & 1.903 & node 9's changes + agent edit + 1 new tool \\
\bottomrule
\end{tabular}
\end{center}

\paragraph{Node 9.}
Node 9 changes the task instruction inside \texttt{forward()} by appending a short instruction that asks the agent to run a new \texttt{test\_runner} tool after editing and use structured failures for iterative repair.

\begin{codebox}{Cumulative diff: \texttt{coding\_agent.py} for node 9}
\begin{lstlisting}[style=diffstyle]
@@ -183,7 +183,7 @@

 Your task is to make changes to the files in the {self.git_dir} directory to address the <problem_description>. I have already taken care of the required dependencies.
 """
-        instruction = f"{task}\n\nPlease analyze the problem description carefully. Then make edits to the code files to complete the instruction."
+        instruction = f"{task}\n\nPlease analyze the problem description carefully. Then make edits to the code files to complete the instruction.\n\nAfter making edits run the 'test_runner' tool to execute the project's test suite and obtain structured failure information. Use the failures to make small, targeted edits and re-run tests iteratively until the test suite passes or a safe iteration limit is reached. Include test-driven steps such as implementing or refining tests if necessary."
         chat_history = chat_with_agent(instruction, model=self.code_model, msg_history=[], logging=safe_log)
         chat_history_str = str(chat_history)
\end{lstlisting}
\end{codebox}

The new \texttt{test\_runner} tool auto-detects common project test frameworks, including \texttt{pytest}, \texttt{cargo}, \texttt{go test}, \texttt{npm}, Gradle, and Maven, and returns structured pass/fail output with parsed failures. The agent sees the following tool schema.

\begin{codebox}{\texttt{tools/test\_runner.py}: exposed tool schema}
\begin{lstlisting}[style=pystyle]
def tool_info():
    return {
        "name": "test_runner",
        "description": "Run repository tests and return structured "
                       "results and failure parsing across common "
                       "languages (pytest, cargo, go test, npm, "
                       "gradle/maven).",
        "input_schema": {
            "type": "object",
            "properties": {
                "command": {"type": "string",
                            "description": "Test command to run "
                            "(optional). If omitted the tool will "
                            "attempt to auto-detect an appropriate "
                            "test command based on repo files."},
                "timeout": {"type": "number",
                            "description": "Timeout in seconds for "
                            "the test command."},
                "cwd":     {"type": "string",
                            "description": "Working directory in "
                            "which to run the tests (optional)."},
            },
            "required": []
        }
    }
\end{lstlisting}
\end{codebox}

\paragraph{Node 24.}
Node 24 keeps node 9's \texttt{test\_runner} intervention, then adds a second tool, \texttt{test\_analyzer}, and modifies the agent to call it before invoking the LLM. The analyzer attempts to infer public APIs from test files and optionally apply a scaffold patch. Despite this larger change, node 24 does not improve the measured accuracy over node 9. The subset accuracy of both are 44\% and the full Polyglot accuracies of node 9 and node 24 are 35.6\% and 33.8\% respectively.

\begin{codebox}{Additional cumulative diff: \texttt{coding\_agent.py} for node 24}
\begin{lstlisting}[style=diffstyle]
@@ -7,6 +7,7 @@

 from llm_withtools import CLAUDE_MODEL, OPENAI_MODEL, chat_with_agent
 from utils.git_utils import diff_versus_commit, reset_to_commit, apply_patch
+from tools import load_all_tools

@@ -183,10 +184,54 @@

-        instruction = f"{task}\n\nPlease analyze the problem description carefully. Then make edits to the code files to complete the instruction."
+        instruction = f"{task}\n\nPlease analyze the problem description carefully. Then make edits to the code files to complete the instruction.\n\nAfter making edits run the 'test_runner' tool to execute the project's test suite and obtain structured failure information. Use the failures to make small, targeted edits and re-run tests iteratively until the test suite passes or a safe iteration limit is reached. Include test-driven steps such as implementing or refining tests if necessary."
+
+        # Run test analyzer tool prior to invoking the LLM so the agent has API inference context
+        try:
+            all_tools = load_all_tools(logging=safe_log)
+            tools_dict = {t['info']['name']: t for t in all_tools}
+            analyzer = tools_dict.get('test_analyzer')
+            analysis_json = None
+            if analyzer:
+                test_paths = []
+                for root, _, files in os.walk(self.git_dir):
+                    for fn in files:
+                        if self.language == 'cpp' and (fn.endswith('_test.cpp') or fn.endswith('.cpp') and 'test' in fn.lower()):
+                            test_paths.append(os.path.join(root, fn))
+                        if self.language == 'python' and (fn.startswith('test_') or fn.endswith('_test.py')):
+                            test_paths.append(os.path.join(root, fn))
+                try:
+                    analyzer_result_str = analyzer['function'](
+                        paths=test_paths,
+                        language=self.language,
+                        generate_scaffold=True,
+                    )
+                    analysis_json = analyzer_result_str
+                    safe_log(f"[test_analyzer] result: {analysis_json}")
+                    try:
+                        parsed = json.loads(analysis_json)
+                        scaffold = parsed.get('suggested_scaffold')
+                        if scaffold:
+                            safe_log("[test_analyzer] Applying suggested scaffold patch")
+                            apply_patch(self.git_dir, scaffold)
+                    except Exception as e:
+                        safe_log(f"[test_analyzer] could not parse scaffold: {e}")
+                except Exception as e:
+                    safe_log(f"Error running test_analyzer: {e}")
+            else:
+                safe_log('test_analyzer tool not found')
+        except Exception as e:
+            safe_log(f'Error loading tools: {e}')
+
+        if analysis_json:
+            instruction = instruction + "\n\n# Inferred API from tests:\n" + str(analysis_json)

         chat_history = chat_with_agent(instruction, model=self.code_model, msg_history=[], logging=safe_log)
\end{lstlisting}
\end{codebox}

The exposed schema for the new analyzer tool is:

\begin{codebox}{\texttt{tools/test\_analyzer.py}: exposed tool schema}
\begin{lstlisting}[style=pystyle]
def tool_info():
    return {
        'name': 'test_analyzer',
        'description': 'Parse test files to infer expected public API '
                       'and optionally generate scaffolding patches. '
                       'Supports C++, Python, Java, JavaScript, Rust, Go '
                       'with heuristics and returns a JSON description '
                       'of required symbols and a patch suggestion.',
        'input_schema': {
            'type': 'object',
            'properties': {
                'paths':              {'type': 'array',
                                       'items': {'type': 'string'},
                                       'description': "List of file paths "
                                       "(absolute or repo-relative) to "
                                       "analyze."},
                'language':           {'type': 'string',
                                       'description': "Language to "
                                       "prioritize when parsing."},
                'generate_scaffold':  {'type': 'boolean',
                                       'description': 'Whether to return '
                                       'a suggested patch for scaffolding '
                                       'files.'},
            },
            'required': ['paths', 'language'],
        },
    }
\end{lstlisting}
\end{codebox}

The comparison suggests that the most useful discovered modification was not a broad rewrite, but a small active-path change that made the agent more test-driven. The child node adds more automation and more surface area, but does not improve realized accuracy. This is consistent with the role of the judge in SIFT: it can help prioritize changes that are likely to improve actual solve behavior while penalizing additional complexity that introduces runtime or regression risk.


\newpage

\end{document}